\documentclass[11pt]{article}

\usepackage[preprint]{acl}

\usepackage{times}
\usepackage{latexsym}

\usepackage[T1]{fontenc}

\usepackage[utf8]{inputenc}

\usepackage{microtype}

\usepackage{inconsolata}

\usepackage{graphicx}

\usepackage{amsmath}
\usepackage{amssymb}
\usepackage{amsthm}
\usepackage{booktabs}
\usepackage{multirow}
\usepackage{algorithm}
\usepackage{algpseudocode}
\usepackage{xcolor}
\usepackage{subcaption}
\usepackage{enumitem}

\newtheorem{theorem}{Theorem}
\newtheorem{proposition}{Proposition}
\newtheorem{corollary}{Corollary}
\newtheorem{definition}{Definition}

\newif\ifshowrevisions \showrevisionsfalse
\ifshowrevisions
  \newcommand{\revised}[1]{{\color{red}#1}}
\else
  \newcommand{\revised}[1]{#1}
\fi

\newcommand{\calY}{\mathcal{Y}}
\newcommand{\calX}{\mathcal{X}}
\newcommand{\calC}{\mathcal{C}}
\newcommand{\calD}{\mathcal{D}}
\newcommand{\qhat}{\hat{q}}
\newcommand{\snc}{s_{\mathrm{nc}}}
\newcommand{\psocial}{P_{\mathrm{social}}}

\title{From Debate to Decision: Conformal Social Choice for Safe Multi-Agent Deliberation}

\author{
 \textbf{Mengdie Flora Wang\textsuperscript{1}},
 \textbf{Haochen Xie\textsuperscript{1}},
 \textbf{Guanghui Wang \textsuperscript{1}},
 \textbf{Aijing Gao},
\\
\textbf{Guang Yang\textsuperscript{1}},
 \textbf{Ziyuan Li\textsuperscript{2}},
 \textbf{Qucy Wei Qiu\textsuperscript{2}},
 \textbf{Fangwei Han\textsuperscript{2}},
\\
 \textbf{Hengzhi Qiu \textsuperscript{2}},
 \textbf{Yajing Huang\textsuperscript{2}},
 \textbf{Bing Zhu\textsuperscript{2}},
 \textbf{Jae Oh Woo\textsuperscript{1}}
\\
 \textsuperscript{1}AWS Generative AI Innovation Center, \\
 \textsuperscript{2}HSBC Holdings Plc., HSBC Technology Center, China
}

\begin{document}
\maketitle

\begin{abstract}
Multi-agent debate improves LLM reasoning, yet agreement among agents is not evidence of correctness.
When agents converge on a wrong answer through social reinforcement, consensus-based stopping commits that error to an automated action with no recourse.
We introduce Conformal Social Choice, a post-hoc decision layer that converts debate outputs into calibrated act-versus-escalate decisions.
Verbalized probability distributions from heterogeneous agents are aggregated via a linear opinion pool and calibrated with split conformal prediction, yielding prediction sets with a marginal coverage guarantee: the correct answer is included with probability ${\geq}\,1{-}\alpha$, without assumptions on individual model calibration.
A hierarchical action policy maps singleton sets to autonomous action and larger sets to human escalation.
On eight MMLU-Pro domains with three agents (Claude Haiku, DeepSeek-R1, Qwen-3 32B), coverage stays within 1--2 points of the target.
The key finding is not that debate becomes more accurate, but that the conformal layer makes its failures actionable: 81.9\% of wrong-consensus cases are intercepted at $\alpha{=}0.05$.
Because the layer refuses to act on cases where debate is confidently wrong, the remaining conformal singletons reach 90.0--96.8\% accuracy (up to 22.1pp above consensus stopping)---a selection effect, not a reasoning improvement.
This safety comes at the cost of automation, but the operating point is user-adjustable via $\alpha$.
\end{abstract}

\begin{figure}[t]
  \centering
  \includegraphics[width=\columnwidth]{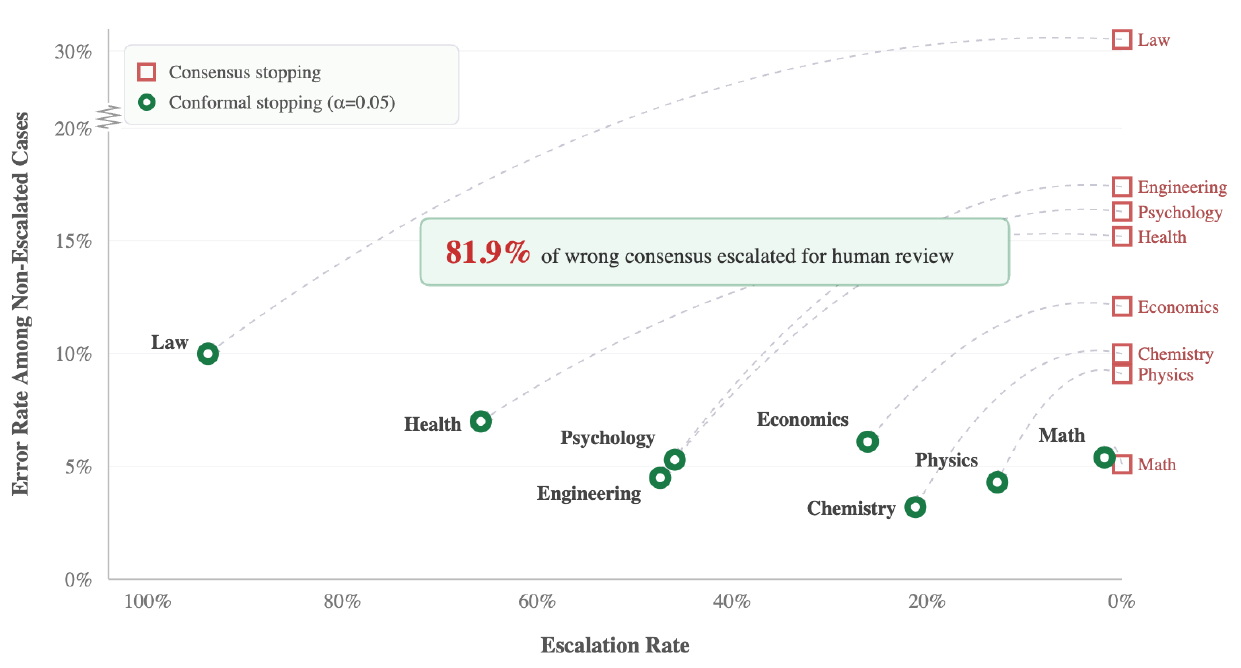}
  \caption{The cost of acting without calibrated refusal. Consensus stopping never escalates (0\% escalation rate) but commits to high error rates (up to 32.1\%). Conformal stopping trades automation for safety: by escalating uncertain cases to human review, it dramatically reduces error among the cases it does act on. At $\alpha{=}0.05$, 81.9\% of wrong-consensus cases are escalated instead of acted upon. Each arrow connects the same domain under the two stopping rules.}
  \label{fig:headline}
\end{figure}

\begin{figure*}[t]
  \centering
  \includegraphics[width=\textwidth]{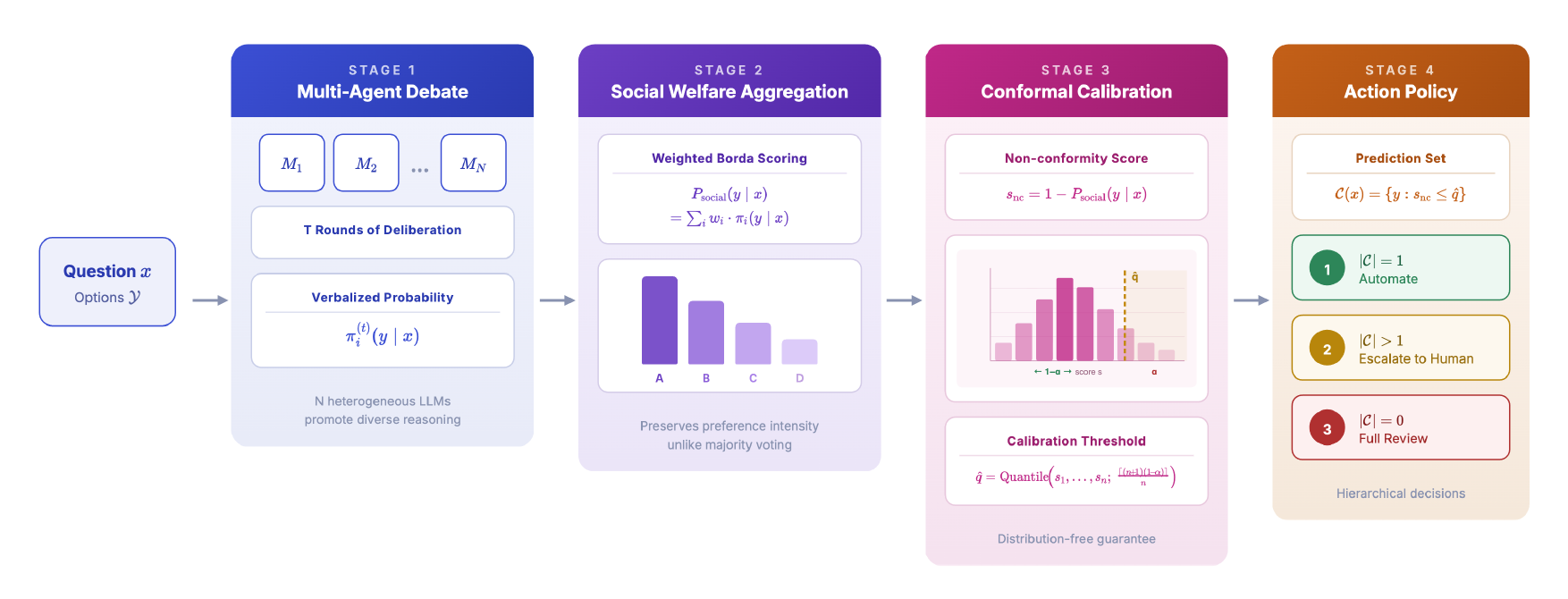}
  \caption{From debate output to deployment decision. Heterogeneous LLM agents debate for $T$ rounds, producing verbalized probability distributions (Stage~1). These are aggregated via a linear opinion pool into social probabilities (Stage~2), calibrated using a held-out set to compute the conformal threshold $\qhat$ (Stage~3), and used to construct prediction sets with a hierarchical action policy (Stage~4). The pipeline adds a calibrated refusal layer: $\calC(x)$ satisfies a marginal coverage guarantee ($\Pr[y \in \calC(x)] \geq 1{-}\alpha$); singletons trigger autonomous action under the calibrated policy, while larger sets trigger escalation.}
  \label{fig:pipeline}
\end{figure*}

\section{Introduction}
\label{sec:intro}

Multi-agent debate systems often end in unanimous answers---and unanimous answers can still be wrong.
Debate improves LLM reasoning \citep{du2023improvingfactualityreasoninglanguage, liang2023encouragingdivergentthinking}, and recent work refines \emph{when} agents converge \citep{hu2025multiagentdebatellmjudges} and \emph{when} debate outperforms voting \citep{choi2025debatevoteyieldsbetter}.
Yet every deployed pipeline ultimately reduces the debate to a single point estimate---majority vote or argmax---and uses agreement as the stopping signal.
\textbf{The real deployment question is not \emph{who won the debate?}\ but \emph{when is it safe to act?}}

Current systems have no answer to this question.
LLM agents conform to perceived majority opinions \citep{perez2023discovering, sharma2024towards}, and in multi-agent debate this social reinforcement can produce \emph{wrong consensus}: agents converge on an incorrect answer with high apparent confidence.
Stability-based stopping \citep{hu2025multiagentdebatellmjudges} detects agreement but cannot tell whether it is correct, and point-estimate aggregation---majority voting, weighted averaging, or argmax---then commits the system, discarding uncertainty that could have flagged the error \citep{genest1986combining}.
The result is \emph{uncalibrated commitment}: wrong consensus feeds to automated action with no safety check.

The missing piece is not better debate but \emph{calibrated refusal}: the system should ask \emph{``is any answer confident enough to act on, or should a human decide?''}

We introduce \textbf{Conformal Social Choice}, a post-hoc decision layer that converts debate outputs into calibrated act-versus-escalate decisions, requiring no retraining and no access to model internals.
The pipeline aggregates verbalized probability distributions from heterogeneous LLM agents into a \emph{collective belief} via a linear opinion pool, then applies split conformal prediction to produce prediction sets satisfying a \emph{marginal coverage guarantee}: $\Pr[y \in \calC(x)] \geq 1{-}\alpha$, without assumptions on individual model calibration.
A hierarchical action policy maps singleton sets to autonomous action and larger sets to human escalation.
That said, the empirical gains are substantial: at $\alpha{=}0.05$, conformal sets intercept 81.9\% of wrong-consensus cases before they reach automated action.
Because the layer refuses to commit on these cases, the remaining conformal singletons are up to 22.1 percentage points more accurate than consensus stopping---a selection effect of calibrated refusal, not a reasoning improvement.

Our contributions are:
\textbf{(i)~Decision reframing:} We recast multi-agent debate from answer selection to risk-controlled action selection, introducing a black-box, post-hoc pipeline that converts debate outputs into set-valued act-versus-escalate decisions with marginal coverage guarantees (\S\ref{sec:method}).
\textbf{(ii)~Failure-mode quantification:} We show that 23.9\% of initially-disputed cases end in unanimous wrong agreement, and that conformal calibration intercepts 81.9\% of these wrong-consensus errors at $\alpha{=}0.05$ (\S\ref{sec:consensus_failure}--\ref{sec:conformal_safety}).
\textbf{(iii)~Operational mitigation:} On eight MMLU-Pro domains, the conformal layer provides a safer operating point---coverage within 1--2 points of the $1{-}\alpha$ target, singleton accuracy up to 22.1pp above consensus stopping---at the cost of resolving fewer cases autonomously (\S\ref{sec:results}).

\section{Related Work}
\label{sec:related}

\paragraph{Multi-agent debate, voting, and opinion dynamics.}
Multi-agent debate improves factuality and reasoning \citep{du2023improvingfactualityreasoninglanguage, liang2023encouragingdivergentthinking}; subsequent work refined stopping criteria \citep{hu2025multiagentdebatellmjudges}, compared debate with voting \citep{choi2025debatevoteyieldsbetter}, modeled debate via Bayesian Nash Equilibrium \citep{yi2025debateequilibriumbeliefdrivenmultiagent}, and used cross-examination to detect errors \citep{cohen2023lmvslm}, while \citet{li2024moreagents} and \citet{chen2024scaling} found diminishing returns from adding agents. The iterative belief exchange in such systems recalls DeGroot averaging \citep{degroot1974reaching} and its bounded-confidence variants \citep{deffuant2000mixing, hegselmann2002opinion}, whose clustering and polarization phenomena also arise in LLM debate; \citet{baccelli2021steady} characterize stationary distributions for stochastic opinion dynamics under power-law confidence bounds. Prior work in this space asks \emph{how} agents should deliberate or \emph{when} debate should stop---neither asks the deployment question we address: \emph{when should the system be allowed to act, and when should it refuse?}


\paragraph{Conformal prediction for LLMs.}
Conformal prediction \citep{vovk2005algorithmic, angelopoulos2023gentle} constructs distribution-free prediction sets; key methods include APS \citep{romano2020classification}, RAPS \citep{angelopoulos2021uncertainty}, and rank-based scores \citep{huang2023conformalranking}.
In NLP, \citet{kumar2023conformal} applied it to multiple-choice QA, \citet{quach2024conformal} to open-ended generation, and \citet{su2024apiconformal} to black-box settings matching ours.
The most related work is Debate-as-Optimization \citep[DAO;][]{wang2024debateoptimizationadaptiveconformal}, which uses conformal prediction as an \emph{intra-debate filter} with a decaying threshold ($\qhat_t = \beta \cdot \qhat_{t-1}$), breaking the standard coverage guarantee and requiring white-box access to a frozen calibration model.
We instead apply standard split conformal \emph{post-hoc} on the final collective belief, preserving a clean marginal guarantee in a fully black-box setting.

\paragraph{Verbalized confidence and social choice.}
A growing line of work elicits uncertainty verbally \citep{lin2022teachingmodels, kadavath2022languagemodels, tian2023justaskcalibrationstrategies, xiong2024llmsexpressuncertainty, yang2024verbalizedconfidence}; semantic uncertainty \citep{kuhn2023semantic} and self-consistency \citep{wang2023selfconsistency} provide complementary signals, but all remain single-agent methods.
On the social choice side, \citet{Yang_2024} and \citet{choi2025debatevoteyieldsbetter} treat aggregation as winner selection; we instead aggregate verbalized probabilities from \emph{multiple heterogeneous} agents via the linear opinion pool \citep{genest1986combining} and apply conformal calibration post-hoc, extending social choice to set-valued risk control without assuming any individual agent is well-calibrated.

\section{Method: Conformal Social Choice}
\label{sec:method}

We present the Conformal Social Choice framework, a four-stage pipeline that transforms the outputs of a multi-agent debate into prediction sets with marginal coverage guarantees (i.e., population-level, not per-instance).
Our goal is not to introduce new conformal theory, but to provide a clean formal contract for act-versus-escalate decisions in multi-agent debate.
Figure~\ref{fig:pipeline} illustrates the overall architecture.

\subsection{Problem Formulation}
\label{sec:problem}

Consider a multiple-choice question-answering task with input space $\calX$ and finite label space $\calY = \{A, B, C, D, \ldots\}$ (we study $|\calY| = 10$ throughout; extension to open-ended generation is discussed in Section~\ref{sec:limitations}).
We have access to an ensemble of $N$ LLM agents $\{M_1, \ldots, M_N\}$ that engage in a $T$-round debate.
Given a held-out calibration set $\calD_{\text{cal}} = \{(x_i, y_i)\}_{i=1}^{n}$ with ground-truth labels, our goal is to construct a set-valued predictor $\calC: \calX \to 2^{\calY}$ satisfying the \emph{marginal coverage} guarantee:
\begin{equation}
    \label{eq:coverage}
    \Pr\bigl[Y_{\text{test}} \in \calC(X_{\text{test}})\bigr] \geq 1 - \alpha,
\end{equation}
where $(X_{\text{test}}, Y_{\text{test}})$ is a new exchangeable test example and $\alpha \in (0,1)$ is the user-specified miscoverage rate.
The output $\calC(x)$ is not merely a set for evaluation; it is the object used for downstream \emph{action}: a singleton triggers automation, a larger set triggers human escalation, and an empty set flags an anomaly (\S\ref{sec:action_policy}).
This formulation recasts multi-agent debate from an accuracy-maximization problem into a \emph{decision} problem with calibrated risk control.

\subsection{Stage 1: Multi-Agent Debate with Verbalized Probability Elicitation}
\label{sec:debate}

\paragraph{Agent ensemble and debate protocol.}
We employ a heterogeneous ensemble of $N$ LLM agents to promote diversity of reasoning strategies \citep{liang2023encouragingdivergentthinking}.
Given input $x$, the debate proceeds for $T$ rounds.
At each round $t \in \{1, \ldots, T\}$, every agent $M_i$ receives the question $x$ along with a summary of all agents' responses from round $t{-}1$ (empty for $t{=}1$) and produces a \emph{verbalized probability distribution}:
\begin{equation}
    \label{eq:verbalized}
    \pi_i^{(t)}(y \mid x) \quad \text{for all } y \in \calY,
\end{equation}
where $\pi_i^{(t)}(y \mid x) \geq 0$ and $\sum_{y \in \calY} \pi_i^{(t)}(y \mid x) = 1$.
Rather than relying on token-level log-probabilities (unavailable for many proprietary APIs), each agent is prompted to output explicit numerical probabilities within structured tags \citep{tian2023justaskcalibrationstrategies, xiong2024llmsexpressuncertainty}.
Parsed probabilities are post-processed (clipped to $[0,1]$, renormalized); when parsing fails entirely, a uniform distribution is substituted.
Verbalized scores are noisy proxies for true beliefs---they may exhibit systematic biases such as round-number preference or anchoring \citep{yang2024verbalizedconfidence}.
Crucially, noisy inputs do not break the conformal coverage guarantee of Theorem~\ref{thm:coverage} (which requires only exchangeability)---they can only degrade set \emph{efficiency}.
In practice, parsing failures are rare (0.77\%; Appendix~\ref{sec:parsing_failures}).

\paragraph{Debate dynamics.}
At each round $t > 1$, agents observe a summary of all other agents' reasoning and confidence distributions from round $t{-}1$, creating a deliberative process where agents update their beliefs in light of peer arguments \citep{du2023improvingfactualityreasoninglanguage}.
By passing only the immediately preceding round's summary (rather than the full history), we reduce context length while preserving the most recent state of deliberation.

\subsection{Stage 2: Social Probability Aggregation}
\label{sec:aggregation}

We aggregate agent beliefs into a \emph{full probability distribution}---not a single vote or argmax---because conformal calibration (\S\ref{sec:calibration}) requires a continuous score over all labels.
We adopt a \emph{linear opinion pool} \citep{genest1986combining}, a weighted mixture standard in Bayesian aggregation and social choice theory.

\begin{definition}[Social Probability]
\label{def:social_prob}
Given agent distributions $\{\pi_i^{(t)}\}_{i=1}^{N}$ at round $t$ and agent weights $\{w_i\}_{i=1}^{N}$ with $\sum_i w_i = 1$, the \emph{social probability} for option $y$ is:
\begin{equation}
    \label{eq:social_score}
    \psocial^{(t)}(y \mid x) = \sum_{i=1}^{N} w_i \cdot \pi_i^{(t)}(y \mid x).
\end{equation}
\end{definition}

\begin{proposition}[Basic properties of social probability]
\label{prop:basic_properties}
The social probability $\psocial(\cdot \mid x)$ satisfies normalization, anonymity (under equal weights), neutrality, unanimity, and monotonicity.
\end{proposition}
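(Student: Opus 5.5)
The plan is to first fix precise meanings for the five properties, since the statement names them informally, and then verify each directly from Definition~\ref{def:social_prob}. Everything follows from two facts: $\psocial^{(t)}(\cdot \mid x)$ is a convex combination of the agent distributions $\pi_i^{(t)}(\cdot \mid x)$, and the weights $w_i$ do not depend on the option $y$. I will assume, as is implicit for an opinion pool, that $w_i \geq 0$; the definition states only $\sum_i w_i = 1$, so nonnegativity will be made explicit.

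\textbf{Normalization.} Nonnegativity follows because each term $w_i \pi_i^{(t)}(y \mid x)$ is nonnegative. For the total mass, exchange the finite sums:
\begin{equation*}
\sum_{y \in \calY} \psocial^{(t)}(y \mid x) = \sum_i w_i \sum_y \pi_i^{(t)}(y \mid x) = \sum_i w_i = 1,
\end{equation*}
using that each $\pi_i^{(t)}$ sums to one (guaranteed by the post-processing in Stage~1).

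\textbf{Anonymity.} With $w_i = 1/N$, the pool equals $\frac{1}{N}\sum_i \pi_i^{(t)}$. This expression is invariant under any permutation $\sigma$ of agent indices, since addition is commutative.

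\textbf{Neutrality.} I will formalize this as equivariance under relabeling options. For any permutation $\tau$ of $\calY$, if every agent reports $\pi_i \circ \tau$, then the pool is $\psocial \circ \tau$. This holds because the weights are the same for every option, so the aggregation commutes with $\tau$ pointwise.

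\textbf{Unanimity.} If $\pi_i^{(t)}(\cdot \mid x) = \pi$ for all $i$, then $\psocial = \sum_i w_i \pi = \pi$. A weaker variant also holds: if every agent assigns zero (respectively one) to $y$, the pool does too, which is a special case of the same computation.

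\textbf{Monotonicity.} If one agent $j$ raises its probability on $y$ while all other agents' reports on $y$ are held fixed, then $\psocial(y \mid x)$ changes by $w_j \Delta \geq 0$. So the social probability is nondecreasing in each $\pi_i(y \mid x)$. The increase is strict whenever $w_j > 0$.

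The only real obstacle is definitional rather than technical. The proposition does not specify what neutrality and monotonicity mean, so the proof must state the exact formulations above. In particular, monotonicity must be phrased pointwise for option $y$, because raising $\pi_j(y)$ forces renormalization to lower $\pi_j(y')$ for some other options $y'$. With these definitions, each property is a one-line consequence of linearity.
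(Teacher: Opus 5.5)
Your proof is correct. For normalization, anonymity, neutrality, and monotonicity it follows the paper's argument: each property is read off from the fact that $\psocial$ is a convex combination of the $\pi_i$ whose weights do not depend on $y$.

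The one real difference is \emph{unanimity}. You formalize it in the opinion-pooling sense: identical reports are reproduced, and zeros and ones are preserved. The paper instead proves the social-choice (weak Pareto) sense: if $\pi_i(y^\star \mid x) \geq \pi_i(y \mid x)$ for every agent $i$ and every $y \neq y^\star$, then $\psocial(y^\star \mid x) \geq \psocial(y \mid x)$. As axioms, neither implies the other. In particular, yours says nothing when agents report different distributions that all rank the same option first, which is exactly the case the Pareto form addresses. It is the reading that fits the paper's May/Arrow-style list of axioms.

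You should add it. It is one line by the same linearity, $\psocial(y^\star \mid x) - \psocial(y \mid x) = \sum_i w_i\,[\pi_i(y^\star \mid x) - \pi_i(y \mid x)] \geq 0$, using the nonnegativity of the weights that you already made explicit.

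In two places your write-up is more careful than the paper's:
\begin{itemize}
\item The paper uses $w_i \geq 0$ even though Definition~\ref{def:social_prob} only states $\sum_i w_i = 1$.
\item The paper's monotonicity clause (``all other quantities remain fixed''), read literally, would leave $\pi_j$ summing to $1+\delta$. Your pointwise formulation avoids this.
\end{itemize}
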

\noindent All properties follow from the linearity of weighted sums; proof in Appendix~\ref{app:proofs}.
Unlike majority voting, which reduces each agent's output to a single vote, the social probability preserves the \emph{intensity} of preferences: an agent assigning 0.6 to option A contributes differently than one assigning 0.99---a distinction that majority voting loses entirely.

\paragraph{Weight strategy.}
We use uniform weights $w_i = 1/N$ throughout.
This is the cleanest assumption-free default: learned or accuracy-based weighting would require extra supervision or validation data, complicating the black-box, post-hoc framing.
Empirically, entropy-based weighting (which upweights more confident agents per instance) produces negligible differences (mean $|\Delta\text{coverage}| = 0.4\%$, mean $|\Delta\text{set size}| = 0.06$; Appendix~\ref{app:weighting}), confirming that the debate consensus mechanism dominates the aggregation rule.

\paragraph{Robustness of the social winner.}
As a diagnostic property, we characterize when the top-ranked label is stable under agent-level perturbations.

\begin{theorem}[Margin robustness]
\label{thm:robustness}
Let $\Delta(x) = \psocial(y_1 \mid x) - \psocial(y_2 \mid x)$ be the margin between the top two labels. If perturbed distributions satisfy $\|\widetilde{\pi}_i(\cdot \mid x) - \pi_i(\cdot \mid x)\|_\infty \leq \varepsilon_i$, then
$\|\widetilde{P}_{\mathrm{social}}(\cdot \mid x) - \psocial(\cdot \mid x)\|_\infty \leq \sum_i w_i \varepsilon_i$.
If $\Delta(x) > 2\sum_i w_i \varepsilon_i$, the top label is unchanged.
\end{theorem}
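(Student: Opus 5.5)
The argument has two parts: a sup-norm bound on the perturbed social probability, obtained by linearity of the opinion pool (Definition~\ref{def:social_prob}), and a margin argument showing the winner survives.

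\emph{First claim.} Fix $x$ and any label $y \in \calY$. By Eq.~\eqref{eq:social_score} with the same weights for both pools,
\[
\widetilde{P}_{\mathrm{social}}(y \mid x) - \psocial(y \mid x) = \sum_{i=1}^{N} w_i \bigl(\widetilde{\pi}_i(y \mid x) - \pi_i(y \mid x)\bigr).
\]
The weights are nonnegative and sum to one, so the triangle inequality bounds the absolute value of the right-hand side by $\sum_i w_i |\widetilde{\pi}_i(y\mid x) - \pi_i(y\mid x)| \le \sum_i w_i \varepsilon_i$. Taking the supremum over the finite set $\calY$ gives the $\ell_\infty$ bound. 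Write $\eta := \sum_i w_i \varepsilon_i$.

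\emph{Second claim.} Let $y_1$ be the top label and $y_2$ the runner-up under $\psocial$, so $\psocial(y_1\mid x) - \psocial(y \mid x) \ge \Delta(x)$ for every $y \neq y_1$. For any such $y$, apply the first claim once at $y_1$ and once at $y$:
\[
\widetilde{P}_{\mathrm{social}}(y_1\mid x) - \widetilde{P}_{\mathrm{social}}(y\mid x) \ge \bigl(\psocial(y_1\mid x) - \eta\bigr) - \bigl(\psocial(y\mid x) + \eta\bigr) \ge \Delta(x) - 2\eta > 0.
\]
Hence $y_1$ strictly dominates every other label under the perturbed pool and remains the unique argmax.

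The only point needing care is the bookkeeping rather than any real difficulty. The assumption $\Delta(x) > 2\eta$ itself forces $\Delta(x) > 0$, so the top label is unique and the runner-up $y_2$ is well defined. Comparing against $y_2$ alone is not enough: the comparison has to cover \emph{all} $y \neq y_1$, which works because $y_2$ has the largest competing mass and so $\Delta(x)$ is the smallest gap. The weights must also be nonnegative for the triangle-inequality step; this holds for the uniform weights the paper uses and is implicit in the pool being a mixture.
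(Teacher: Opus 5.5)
Your proof is correct and follows the same route as the paper: a triangle inequality with nonnegative weights gives the $\ell_\infty$ bound, and a two-sided $\sum_i w_i \varepsilon_i$ shift then leaves a positive perturbed gap. The one difference is that the paper only compares $y_1$ against $y_2$. Your comparison against every $y \neq y_1$, using $\psocial(y_1 \mid x) - \psocial(y \mid x) \geq \Delta(x)$, makes explicit the step needed to conclude that $y_1$ is still the argmax.
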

\noindent Proof in Appendix~\ref{app:proofs}. Large margins imply robustness; the conformal calibration of Stage~3 complements this by providing a coverage guarantee regardless of margin size.

\subsection{Stage 3: Conformal Calibration}
\label{sec:calibration}

We apply split conformal prediction \citep{vovk2005algorithmic} on top of the aggregated social probabilities, transforming heuristic confidence scores into rigorous coverage guarantees.

\paragraph{Non-conformity score.}
We define a non-conformity score measuring how poorly a candidate label $y$ conforms to the social consensus:

\begin{definition}[Probability-based non-conformity score]
\label{def:prob_score}
\begin{equation}
    \label{eq:snc_prob}
    \snc(x, y) = 1 - \psocial(y \mid x).
\end{equation}
\end{definition}

\noindent This score is high when the social probability for $y$ is low, indicating that the collective believes $y$ is unlikely.
An alternative rank-based cumulative score (analogous to APS; \citealp{romano2020classification}) is defined in Appendix~\ref{app:rank_score}.
We use the probability-based score throughout because it yields a clean threshold interpretation (Proposition~\ref{prop:threshold}: $y \in \calC(x)$ iff $\psocial(y \mid x) \geq 1 - \qhat$), directly linking set membership to social probability and making the automate/escalate decision transparently interpretable.

\paragraph{Calibration procedure.}
Given the calibration set $\calD_{\text{cal}} = \{(x_i, y_i)\}_{i=1}^{n}$:
(1)~for each calibration example $(x_i, y_i)$, run the debate pipeline and compute the social probability $\psocial(\cdot \mid x_i)$;
(2)~compute the non-conformity score of the ground truth: $s_i = \snc(x_i, y_i)$;
(3)~determine the conformal threshold:
\begin{equation}
    \label{eq:qhat}
    \qhat = \mathrm{Quantile}\!\Big(s_1, \ldots, s_n;\; \tfrac{\lceil (n{+}1)(1{-}\alpha) \rceil}{n}\Big).
\end{equation}
The finite-sample correction $\lceil (n+1)(1-\alpha) \rceil / n$ ensures the following guarantee:

\begin{theorem}[Marginal Coverage \citep{vovk2005algorithmic}]
\label{thm:coverage}
If the calibration and test examples are exchangeable, then the prediction set $\calC(x) = \{y \in \calY : \snc(x, y) \leq \qhat\}$ satisfies:
\begin{equation}
    \Pr\bigl[Y_{\text{test}} \in \calC(X_{\text{test}})\bigr] \geq 1 - \alpha.
\end{equation}
\end{theorem}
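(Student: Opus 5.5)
The plan is the standard exchangeability-and-rank argument for split conformal prediction. The scoring function $\snc$ must be fixed independently of the calibration and test labels. Here it is: the debate protocol, the agent weights $w_i$, and the score $1-\psocial(y\mid x)$ are all specified before any calibration label is seen. Moreover, the debate for each input is run separately on that input alone. Consequently, if $(X_1,Y_1),\dots,(X_n,Y_n),(X_{n+1},Y_{n+1})$ are exchangeable, so are the scalar scores $s_i=\snc(X_i,Y_i)$ for $i=1,\dots,n+1$, with $(X_{n+1},Y_{n+1})$ the test pair. One caveat: if the LLM agents are stochastic, I treat their internal randomness as part of each example, drawn i.i.d. across examples, so exchangeability is preserved.

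The first step rewrites the coverage event. By definition of $\calC$, $Y_{n+1}\in\calC(X_{n+1})$ holds iff $s_{n+1}\le \qhat$. Let $k=\lceil (n+1)(1-\alpha)\rceil$. By \eqref{eq:qhat}, $\qhat$ is the $k$-th smallest of $s_1,\dots,s_n$. If $k>n$, i.e.\ $\alpha<1/(n+1)$, set $\qhat=+\infty$, so coverage is trivial.

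The second step, the key one, is the rank lemma. Since $\qhat$ is the $k$-th smallest calibration score, $s_{n+1}\le\qhat$ holds iff $s_{n+1}$ is among the $k$ smallest of the full multiset $\{s_1,\dots,s_{n+1}\}$. Ties are the main subtlety. I would handle them either by a random tie-breaking argument or directly: the event $s_{n+1}>\qhat$ forces $s_{n+1}$ to be strictly larger than at least $k$ of the other values, so its rank among all $n+1$ values exceeds $k$. By exchangeability, every position is equally likely to be the one achieving any given rank under uniform random tie-breaking, so $\Pr[\text{rank of } s_{n+1}\le k]\ge k/(n+1)$. Ties only help, since a tied test score with $\qhat$ is still covered.

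Finally, $k/(n+1)\ge 1-\alpha$ by the definition of the ceiling, which yields $\Pr[Y_{n+1}\in\calC(X_{n+1})]\ge 1-\alpha$. The probability here is marginal over the calibration set and the test point together. I expect the only real obstacle to be stating the tie handling carefully together with the exchangeability of scores under the black-box debate. The arithmetic is routine, and since this is the classical result of \citet{vovk2005algorithmic}, it can also be cited directly.
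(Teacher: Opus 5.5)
Your proof is correct and matches the paper. The paper gives no proof of its own and simply invokes the classical split-conformal result of Vovk et al. Your argument is exactly the standard proof behind that citation: the scores $s_1,\dots,s_{n+1}$ are exchangeable under a fixed score function (with the agents' sampling randomness folded into each example), $\qhat$ is the $k$-th smallest calibration score with $k=\lceil (n+1)(1-\alpha)\rceil$, and random tie-breaking gives $\Pr[s_{n+1}\le\qhat]\ge\Pr[\text{rank}\le k]=k/(n+1)\ge 1-\alpha$. Your one-sided handling of ties, that $s_{n+1}>\qhat$ forces rank $>k$ under any tie-breaking, is the right way to make the rank step rigorous.
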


This guarantee is \emph{distribution-free}: it requires only exchangeability of calibration and test data, with no assumptions on individual agent calibration.
We stress that this is a guarantee on \emph{set coverage}---not on per-instance correctness, singleton accuracy, or conditional coverage within any subgroup.

\begin{proposition}[Threshold form of the prediction set]
\label{prop:threshold}
For $\snc^{\mathrm{prob}}(x,y) = 1 - \psocial(y \mid x)$, the conformal prediction set at threshold $\qhat$ satisfies:
\begin{align}
    \calC(x) &= \{y : \snc^{\mathrm{prob}}(x,y) \leq \qhat\} \notag \\
              &= \{y : \psocial(y \mid x) \geq 1 - \qhat\}.
\end{align}
\end{proposition}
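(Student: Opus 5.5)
The plan is to unfold the definitions and use the fact that $y \mapsto 1 - \psocial(y \mid x)$ is a strictly decreasing affine function of the social probability, so the two descriptions of $\calC(x)$ pick out the same labels. The first equality, $\calC(x) = \{y : \snc^{\mathrm{prob}}(x,y) \leq \qhat\}$, holds by definition. It is the set defined in Theorem~\ref{thm:coverage}, instantiated with the probability-based score of Definition~\ref{def:prob_score}. So the only real content is the second equality, which I will prove by double inclusion on the finite label set $\calY$.

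For the second equality, fix $x$ and an arbitrary $y \in \calY$. Substituting Definition~\ref{def:prob_score} into the membership condition gives $1 - \psocial(y \mid x) \leq \qhat$. Adding $\psocial(y \mid x) - \qhat$ to both sides yields $1 - \qhat \leq \psocial(y \mid x)$. Each step is reversible, because adding a real constant to both sides of an inequality preserves it in both directions. Hence $y$ belongs to the left-hand set if and only if it belongs to the right-hand set. Since $y$ was arbitrary, the sets coincide.

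There is essentially no obstacle. The one point to handle with care is that $\qhat$ is a fixed real number, not something that depends on the test label $y$. As an empirical quantile of the calibration scores $s_1,\ldots,s_n$ in~\eqref{eq:qhat}, it is computed once from $\calD_{\text{cal}}$, before and independently of the test point and the candidate $y$. This is what lets it act as a constant in the rearrangement.

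It is also worth checking the degenerate case where the quantile level $\lceil (n+1)(1-\alpha)\rceil / n$ exceeds $1$. In that case the usual convention sets $\qhat = +\infty$ (or $\qhat = 1$, the maximal possible score). The threshold $1 - \qhat$ is then $-\infty$ or $0$, respectively. Since $\psocial(y \mid x) \in [0,1]$ by the normalization part of Proposition~\ref{prop:basic_properties}, both descriptions then give $\calC(x) = \calY$, so the equivalence still holds.
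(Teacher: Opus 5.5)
Your proposal is correct and follows the paper's proof: the first equality is the definition of $\calC(x)$, and the second comes from reversibly rearranging $1 - \psocial(y \mid x) \leq \qhat$ into $\psocial(y \mid x) \geq 1 - \qhat$. Your remarks that $\qhat$ is a fixed constant and about the degenerate quantile case are valid extras that the paper does not need.
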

\noindent The equivalence follows by rearranging the inequality (Appendix~\ref{app:proofs}).

\subsection{Stage 4: Prediction Set Construction and Action Policy}
\label{sec:action_policy}

For a new test input $x$, the system executes the debate, computes $\psocial(\cdot \mid x)$, and constructs the prediction set:
\begin{equation}
    \label{eq:pred_set}
    \calC(x) = \bigl\{y \in \calY : \snc(x, y) \leq \qhat\bigr\}.
\end{equation}

The set size $|\calC(x)|$ provides a calibrated measure of instance-level uncertainty, bounded by $\lfloor 1/(1{-}\qhat) \rfloor$ (Corollary~\ref{cor:cardinality} in Appendix).

\begin{proposition}[Conditions for singleton automation]
\label{prop:action_policy}
Let $p_k = \psocial(y_k \mid x)$ with $p_1 \geq p_2 \geq \cdots$, $\Delta(x) = p_1 - p_2$, and $\tau = 1 - \qhat$. Then:
(1)~$|\calC(x)| = 1$ (automation) iff $p_1 \geq \tau$ and $p_2 < \tau$; equivalently, $p_1 \geq \tau$ and $\Delta(x) > p_1 - \tau$.
A simpler sufficient condition is $p_1 \geq \tau$ and $\Delta(x) > \qhat$.
(2)~$|\calC(x)| \geq 2$ (escalation) iff $p_2 \geq \tau$.
(3)~$|\calC(x)| = 0$ (full review) iff $p_1 < \tau$.
\end{proposition}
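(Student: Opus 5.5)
The plan is to reduce everything to Proposition~\ref{prop:threshold}, which gives $\calC(x) = \{y : \psocial(y \mid x) \geq \tau\}$ with $\tau = 1-\qhat$. Once membership is a pure threshold test on the social probabilities, the cardinality of $\calC(x)$ is determined by how many of the sorted values $p_1 \geq p_2 \geq \cdots$ reach $\tau$. The key monotonicity fact is that if $p_k \geq \tau$ then $p_j \geq \tau$ for all $j \leq k$. Hence $\calC(x) = \{y_1,\ldots,y_m\}$, where $m = \max\{k : p_k \geq \tau\}$, with $m=0$ if no such $k$ exists. Ties among the $p_k$ are harmless, since the threshold test does not depend on how tied labels are ordered.

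From this description the three parts follow directly. For (3), $|\calC(x)|=0$ iff no $p_k \geq \tau$, iff $p_1 < \tau$, because $p_1$ is the maximum. For (2), $|\calC(x)| \geq 2$ iff $m \geq 2$, iff $p_2 \geq \tau$; the condition $p_1 \geq \tau$ is then automatic since $p_1 \geq p_2$. For (1), $|\calC(x)| = 1$ iff $m = 1$, iff $p_1 \geq \tau$ and $p_2 < \tau$.

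For the equivalent form in (1), I will rewrite $p_2 < \tau$ as $p_1 - p_2 > p_1 - \tau$, that is, $\Delta(x) > p_1 - \tau$. This is a pure rearrangement, so it holds independently of the condition $p_1 \geq \tau$. For the sufficient condition, I need to show that $p_1 - \tau \leq \qhat$ whenever $\Delta(x) > \qhat$. Since $p_1 \leq 1$, we have $p_1 - \tau \leq 1 - \tau = \qhat$. So $\Delta(x) > \qhat$ implies $\Delta(x) > p_1 - \tau$, and combined with $p_1 \geq \tau$ this gives a singleton.

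The only real obstacle is bookkeeping. The sorted labels $y_k$ must be defined consistently, and the edge case $|\calY| = 1$ (where $p_2$ is undefined) should be handled by the convention $p_2 = 0$ or simply excluded, since here $|\calY| = 10$. I would also note, without relying on it, that $\qhat \in [0,1]$ because the scores $s_i = 1 - \psocial(y_i \mid x_i)$ lie in $[0,1]$.
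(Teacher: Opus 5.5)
Your proposal is correct and follows essentially the same route as the paper: reduce to the threshold form $\calC(x) = \{y : \psocial(y \mid x) \geq \tau\}$ from Proposition~\ref{prop:threshold}, read off (1)--(3) from the ordering $p_1 \geq p_2 \geq \cdots$, and obtain the sufficient condition from $p_1 \leq 1$. Your explicit prefix description $\calC(x) = \{y_1,\ldots,y_m\}$ and your remark on ties are slightly more careful bookkeeping than the paper's case-by-case argument, but the content is the same.
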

\noindent Proof in Appendix~\ref{app:proofs}. Intuitively, singleton automation requires the runner-up to fall below the inclusion threshold $\tau = 1 - \qhat$---when agents are split, the prediction set grows, triggering escalation.
These conditions yield the following \textbf{hierarchical action policy}:

\paragraph{Case 1: $|\calC(x)| = 1$ (Full Automation).}
The system outputs the single answer autonomously.
Singleton status does not guarantee per-instance correctness; the coverage guarantee is marginal, not conditional on set size.

\paragraph{Case 2: $|\calC(x)| > 1$ (Human-in-the-Loop Escalation).}
The prediction set contains multiple candidates, signaling genuine uncertainty.
The system \emph{abstains} and escalates the pruned candidate set to a human expert, communicating both that it is uncertain and \emph{which options remain plausible}, reducing the decision space from $|\calY|$ to $|\calC(x)|$.

\paragraph{Case 3: $|\calC(x)| = 0$ (Anomaly Detection).}
No option conforms to the social consensus, triggering a full manual review.

\noindent This policy transforms calibrated uncertainty into an operational decision rule.
\emph{Important caveat:} the coverage guarantee is \emph{marginal}---the action policy provides a well-calibrated risk budget over the population of queries, not a per-instance certificate.

\section{Experimental Setup}
\label{sec:experiments}

\paragraph{Dataset.}
We evaluate on \textbf{MMLU-Pro} \citep{wang2024mmlupro}, a 10-option multiple-choice benchmark covering eight professional domains: Mathematics ($n{=}1{,}351$), Physics ($n{=}1{,}299$), Chemistry ($n{=}1{,}132$), Law ($n{=}1{,}101$), Engineering ($n{=}969$), Economics ($n{=}844$), Health ($n{=}818$), and Psychology ($n{=}798$).
The expanded option space (compared to MMLU's 4 options) makes the task substantially harder and better differentiates methods.
Domain descriptions are provided in Appendix~\ref{app:setup}.

\paragraph{Baselines.}

We contextualize our set-valued framework against standard point-estimate methods, where debate + majority voting achieves the highest accuracy (66.7--94.2\% across domains), outperforming greedy decoding, self-reflection, and static majority voting (Appendix Table~\ref{tab:baselines} and \ref{app:setup}).
We use consensus stopping as the primary comparator because it reflects the dominant deployment rule in debate systems: commit to an answer once agents agree.
Our goal is not to outperform an oracle abstention policy, but to replace this commit-on-consensus rule with a calibrated act-versus-escalate decision layer.

\paragraph{Metrics.}
We report \emph{marginal coverage} (fraction of test instances where $y \in \calC(x)$; target $\geq 1{-}\alpha$), \emph{average set size} $\frac{1}{m}\sum_j |\calC(x_j)|$ (smaller is better, conditioned on coverage), \emph{singleton rate} (fraction with $|\calC(x)|{=}1$), and \emph{singleton accuracy} (accuracy among instances where $|\calC(x)|{=}1$).
Small finite-sample deviations below the nominal coverage target on individual domains are expected and do not violate the marginal guarantee, which holds in expectation over the random calibration split.

\paragraph{Implementation.}
Our ensemble consists of three heterogeneous LLMs accessed via AWS Bedrock---Claude Haiku~4.5 (Anthropic), DeepSeek-R1 (DeepSeek), and Qwen-3~32B (Alibaba)---selected to maximize architectural and training diversity.
Debate runs for $T{=}4$ rounds (indexed 0--3) with temperature 0.7.
For conformal calibration, each domain is split 50/50 into calibration and test sets, and we evaluate at $\alpha \in \{0.05, 0.10\}$.
The threshold $\qhat$ is computed independently per domain and round.
Full implementation details (prompting, parsing, hyperparameters) are in Appendix~\ref{app:setup}.

\section{Results}
\label{sec:results}


We organize results in three parts: (1)~coverage and calibration quality (\S\ref{sec:conformal_results}), (2)~the failure mode of consensus stopping (\S\ref{sec:consensus_failure}), and (3)~conformal stopping as a safety mechanism, including the trade-off between reliability and automation (\S\ref{sec:conformal_safety}).

\begin{figure}[t]
  \centering
  \includegraphics[width=\columnwidth]{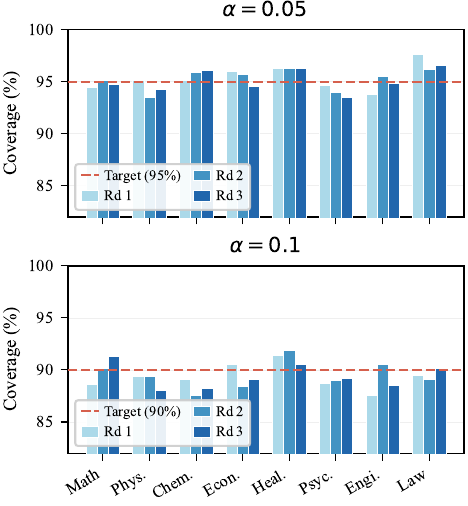}
  \caption{Coverage maintenance across debate rounds. At $\alpha{=}0.05$, coverage remains near 95\% across all rounds for most domains. At $\alpha{=}0.10$, coverage clusters around the 90\% target.}
  \label{fig:coverage}
\end{figure}

\subsection{Conformal Prediction Sets: Coverage and Calibration}
\label{sec:conformal_results}

\begin{table*}[t]
\centering
\scriptsize
\setlength{\tabcolsep}{3.5pt}
\begin{tabular}{@{}cl cc cc cc cc cc cc cc cc@{}}
\toprule
& & \multicolumn{2}{c}{\textbf{Engineering}} & \multicolumn{2}{c}{\textbf{Law}} & \multicolumn{2}{c}{\textbf{Chemistry}} & \multicolumn{2}{c}{\textbf{Physics}} & \multicolumn{2}{c}{\textbf{Math}} & \multicolumn{2}{c}{\textbf{Economics}} & \multicolumn{2}{c}{\textbf{Health}} & \multicolumn{2}{c}{\textbf{Psychology}} \\
\cmidrule(lr){3-4} \cmidrule(lr){5-6} \cmidrule(lr){7-8} \cmidrule(lr){9-10} \cmidrule(lr){11-12} \cmidrule(lr){13-14} \cmidrule(lr){15-16} \cmidrule(lr){17-18}
\textbf{Metric} & \textbf{Rd} & \textbf{.05} & \textbf{.10} & \textbf{.05} & \textbf{.10} & \textbf{.05} & \textbf{.10} & \textbf{.05} & \textbf{.10} & \textbf{.05} & \textbf{.10} & \textbf{.05} & \textbf{.10} & \textbf{.05} & \textbf{.10} & \textbf{.05} & \textbf{.10} \\
\midrule
\multirow{4}{*}{$\qhat$}
  & 0 & .975 & .952 & .980 & .945 & .962 & .784 & .950 & .769 & .750 & .605 & .937 & .764 & .970 & .904 & .954 & .890 \\
  & 1 & .987 & .960 & .990 & .967 & .982 & .702 & .923 & .651 & .718 & .359 & .974 & .744 & .984 & .924 & .967 & .933 \\
  & 2 & .990 & .963 & .994 & .977 & .987 & .669 & .957 & .492 & .704 & .229 & .980 & .823 & .990 & .930 & .978 & .944 \\
  & 3 & .995 & .980 & .997 & .984 & .990 & .733 & .973 & .537 & .740 & .126 & .980 & .852 & .992 & .940 & .983 & .950 \\
\midrule
\multirow{4}{*}{\shortstack[l]{Coverage\\(\%)}}
  & 0 & 93.8 & 87.6 & 97.6 & 89.5 & 95.0 & 89.0 & 95.1 & 89.4 & 94.5 & 88.6 & 96.0 & 90.5 & 96.3 & 91.4 & 94.7 & 88.7 \\
  & 1 & 95.5 & 90.5 & 96.2 & 88.9 & 95.8 & 88.0 & 93.5 & 88.5 & 94.8 & 91.6 & 95.5 & 88.2 & 96.1 & 91.4 & 93.5 & 88.5 \\
  & 2 & 94.8 & 88.5 & 96.5 & 90.0 & 95.9 & 88.0 & 94.0 & 87.8 & 94.5 & 92.9 & 94.1 & 88.2 & 95.8 & 89.7 & 93.0 & 88.5 \\
  & 3 & 95.0 & 90.5 & 96.4 & 90.7 & 95.6 & 87.8 & 94.2 & 88.3 & 94.5 & 93.2 & 93.6 & 88.2 & 95.6 & 89.0 & 93.5 & 88.7 \\
\midrule
\multirow{4}{*}{$|\calC|$}
  & 0 & 4.40 & 3.08 & 6.99 & 3.76 & 2.57 & 1.37 & 2.09 & 1.33 & 1.26 & 0.92 & 1.87 & 1.23 & 3.73 & 1.71 & 2.59 & 1.50 \\
  & 1 & 3.92 & 2.40 & 6.49 & 3.65 & 1.91 & 1.06 & 1.26 & 1.02 & 1.06 & 0.96 & 1.82 & 1.08 & 3.53 & 1.56 & 2.23 & 1.49 \\
  & 2 & 3.63 & 2.00 & 6.79 & 3.65 & 1.75 & 1.02 & 1.27 & 1.00 & 1.02 & 0.97 & 1.73 & 1.08 & 3.93 & 1.42 & 2.27 & 1.44 \\
  & 3 & 3.84 & 2.13 & 6.91 & 3.94 & 1.81 & 1.01 & 1.27 & 1.00 & 1.01 & 0.98 & 1.65 & 1.06 & 4.13 & 1.38 & 2.45 & 1.44 \\
\midrule
\multirow{4}{*}{\shortstack[l]{Singleton\\Rate (\%)}}
  & 0 & 26.8 & 32.0 & 1.5 & 6.4 & 43.3 & 65.7 & 47.8 & 69.2 & 74.9 & 92.0 & 52.1 & 77.3 & 20.3 & 50.4 & 39.6 & 62.7 \\
  & 1 & 45.2 & 56.5 & 4.0 & 14.3 & 70.8 & 92.8 & 81.8 & 97.5 & 94.1 & 95.6 & 65.4 & 91.5 & 28.4 & 60.6 & 49.6 & 70.9 \\
  & 2 & 50.5 & 64.3 & 5.5 & 18.5 & 77.0 & 97.5 & 86.0 & 99.5 & 97.6 & 97.5 & 70.1 & 93.1 & 31.6 & 68.9 & 52.4 & 74.7 \\
  & 3 & 52.6 & 67.2 & 6.2 & 20.9 & 78.8 & 98.8 & 87.2 & 99.8 & 98.2 & 97.8 & 73.9 & 94.3 & 34.2 & 72.4 & 54.1 & 76.2 \\
\midrule
\multirow{4}{*}{\shortstack[l]{Singleton\\Acc. (\%)}}
  & 0 & 96.9 & 96.8 & 100.0 & 91.4 & 97.5 & 93.8 & 98.4 & 92.2 & 96.4 & 96.1 & 97.3 & 91.7 & 91.6 & 92.7 & 96.8 & 92.4 \\
  & 1 & 96.6 & 92.4 & 100.0 & 81.8 & 95.5 & 78.4 & 92.8 & 81.0 & 89.2 & 87.5 & 87.5 & 73.3 & 100.0 & 88.1 & 87.5 & 84.8 \\
  & 2 & 84.6 & 79.0 & 62.5 & 69.6 & 97.1 & 77.8 & 88.9 & 53.9 & 83.3 & 69.2 & 75.0 & 57.1 & 84.6 & 79.4 & 90.9 & 93.3 \\
  & 3 & 70.0 & 92.9 & 75.0 & 92.3 & 90.0 & 42.9 & 100.0 & 100.0 & 100.0 & 100.0 & 75.0 & 80.0 & 72.7 & 71.4 & 100.0 & 83.3 \\
\bottomrule
\end{tabular}
\caption{Conformal prediction results with early stopping on MMLU-Pro (Haiku + DeepSeek-R1 + Qwen-3 32B). Two sub-columns per domain: $\alpha{=}0.05$ (left) and $\alpha{=}0.10$ (right). $\qhat$~=~calibration threshold, \textbf{Coverage}~=~empirical coverage (\%), $|\calC|$~=~average prediction set size, \textbf{Singleton Rate}~=~cumulative fraction of samples resolved as singletons ($|\calC|{=}1$) by that round (\%), \textbf{Singleton Acc.}~=~accuracy among samples whose conformal set reached singleton at that round (\%). Rows~0--3 correspond to debate rounds. Coverage remains near the target across all domains, while set size shrinks across rounds---with slower shrinkage on harder domains (Math~$\approx$1 vs.\ Law~$\approx$7). Entropy-based weighting yields near-identical results (Table~\ref{tab:conformal_entropy}; mean $|\Delta\text{Coverage}| = 0.6\%$).}
\label{tab:conformal}
\end{table*}

Table~\ref{tab:conformal} presents the core conformal prediction results at $\alpha = 0.05$ and $\alpha = 0.10$.

\paragraph{Coverage is close to the nominal target.}
At $\alpha{=}0.05$ (target: 95\%), empirical coverage ranges from 93.0\% to 97.6\% across domains and rounds.
Slight undercoverage on individual domain-round combinations is expected in finite-sample split conformal prediction ($O(1/\sqrt{n})$ deviation).
At $\alpha{=}0.10$, coverage clusters around 87.6--93.2\%.

\paragraph{Set size is difficulty-adaptive.}
At $\alpha{=}0.05$, Math achieves average set size 1.01 (round~3), enabling confident automation on nearly all questions, while Law produces 6.91, reflecting genuine ambiguity among 10 options.
This domain-adaptive abstention emerges entirely from calibration without per-domain tuning: only 52.6\% of Engineering samples reach singleton status, compared to 98.2\% on Math.
Set sizes decrease across rounds while coverage is maintained, indicating that debate sharpens the collective distribution rather than inflating confidence.

\paragraph{Debate as uncertainty reduction.}
The round-over-round decrease in $|\calC|$ provides a concrete, calibrated measure of the information gained through deliberation.
At $\alpha{=}0.05$, the singleton rate on Physics grows from 47.8\% (round~0) to 87.2\% (round~3), meaning debate resolves the uncertainty of nearly 40\% of initially ambiguous samples while keeping coverage on track.
This contrasts with point-estimate majority voting (Appendix~\ref{sec:baseline_results}), where accuracy plateaus after round~1 and provides no uncertainty quantification.

\subsection{Consensus Stopping and Wrong-Consensus Risk}
\label{sec:consensus_failure}

A key practical question is when to stop the debate.
The natural heuristic---\emph{consensus-based} stopping (terminate when all agents agree)---is fast but fundamentally unsafe, because debate introduces a systematic failure mode: agents may unanimously agree on the \emph{wrong} answer through social reinforcement, we term \emph{wrong-consensus convergence}.

\paragraph{Consensus stopping is fast but error-prone.}
Consensus-based early stopping terminates extremely early (average round 1.36--1.81 across domains), stopping 87.7--97.8\% of samples before the final round (Table~\ref{tab:early_stopping}).
However, the accuracy of consensus-stopped samples varies widely (67.9--94.9\%), and consensus provides \emph{no coverage guarantee}.
In Engineering, consensus stops samples by round 1.80 on average with 82.6\% accuracy---meaning 17.4\% of ``confidently'' stopped predictions are wrong, with no mechanism to flag them.

\paragraph{Wrong-consensus convergence explains why consensus fails.}
Among the 1{,}963 inference samples where agents initially \emph{disagree} at round~0, 469 (23.9\%) converge to unanimous wrong consensus by round~3---nearly one in four initially-disputed questions (Table~\ref{tab:sycophancy}).
The risk varies by domain: Law (34.8\%) and Psychology (33.3\%) are most susceptible, while Math (12.1\%) is most resilient.
Including cases where agents already agreed on a wrong answer at round~0, total wrong consensus at round~3 is 586 out of 4{,}158 (14.1\%).
Consensus stopping treats all unanimous agreement equally, and therefore \emph{commits} to these wrong-consensus errors with no recourse.

\subsection{Conformal Early Stopping as a Safety Mechanism}
\label{sec:conformal_safety}

\begin{figure*}[t]
  \centering
  \includegraphics[width=\textwidth]{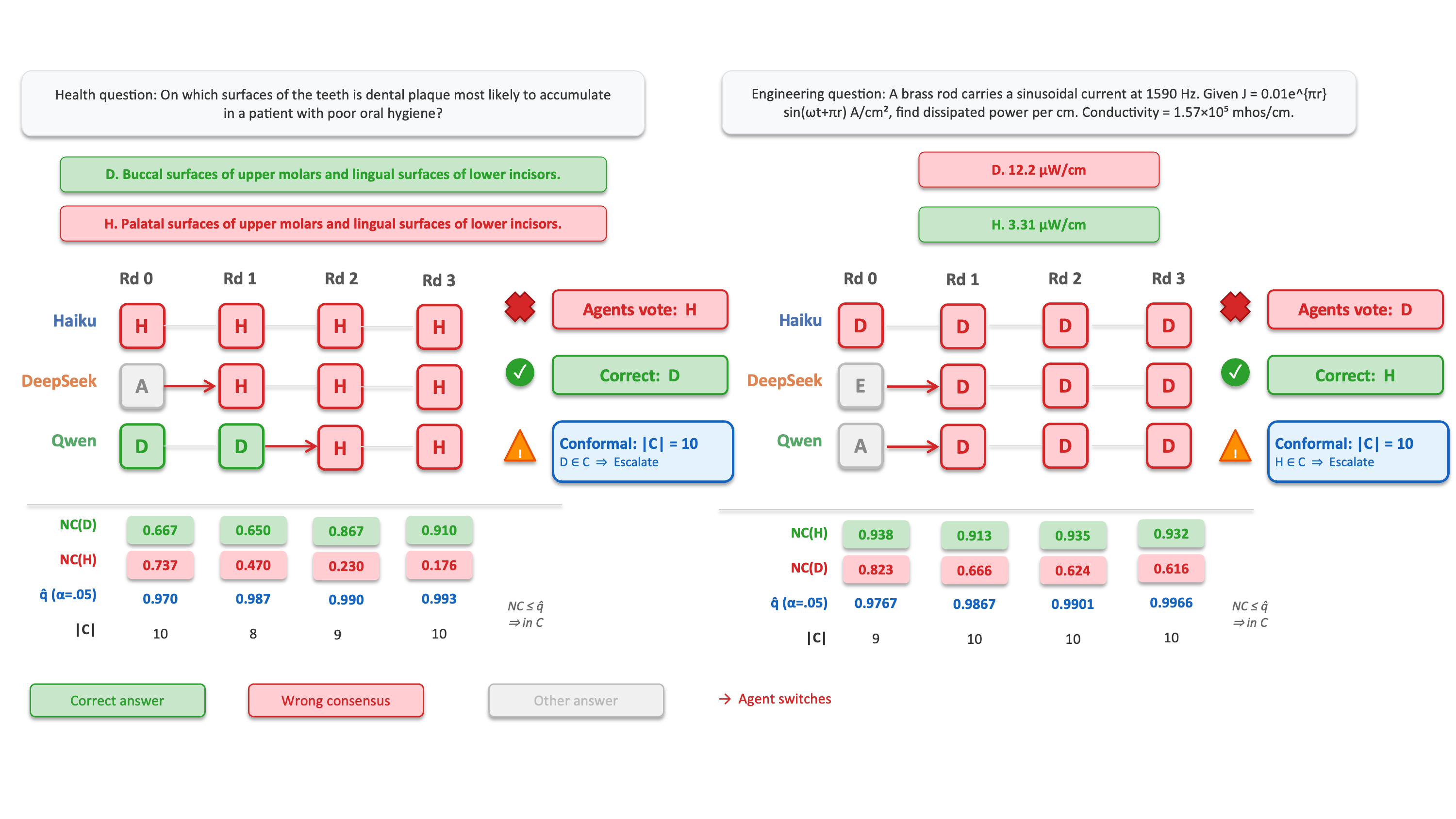}
  \caption{Calibrated refusal in action ($\alpha{=}0.05$). In both examples, agents unanimously converge to the wrong answer through social reinforcement. Consensus alone would commit this error to an automated action; conformal prediction keeps the correct answer inside the set and forces escalation to human review.}
  \label{fig:sycophancy_rejection}
\end{figure*}

\begin{table}[t]
\centering
\resizebox{\columnwidth}{!}{%
\scriptsize
\setlength{\tabcolsep}{2pt}
\begin{tabular}{@{}l r cc cc cc r@{}}
\toprule
& & \multicolumn{2}{c}{\textbf{Avg Round}} & \multicolumn{2}{c}{\textbf{Singleton \%}} & \multicolumn{2}{c}{\textbf{Singleton Acc.}} & \\
\cmidrule(lr){3-4} \cmidrule(lr){5-6} \cmidrule(lr){7-8}
\textbf{Domain} & $n$ & \textbf{Consensus} & \textbf{Conformal} & \textbf{Consensus} & \textbf{Conformal} & \textbf{Consensus} & \textbf{Conformal} & $\Delta$\textbf{Acc} \\
\midrule
Engineering & 485 & 1.80 & 1.67 & 90.1 & 52.6 & 82.6 & 95.5 & +12.9 \\
Law         & 551 & 1.81 & 2.24 & 87.7 &  6.2 & 67.9 & 90.0 & +22.1 \\
Chemistry   & 566 & 1.63 & 1.57 & 95.2 & 78.8 & 90.0 & 96.8 & +6.8 \\
Physics     & 650 & 1.55 & 1.53 & 95.1 & 87.2 & 90.9 & 95.7 & +4.8 \\
Math        & 676 & 1.43 & 1.29 & 97.8 & 98.2 & 94.9 & 94.6 & $-$0.3 \\
Economics   & 422 & 1.36 & 1.46 & 96.0 & 73.9 & 87.9 & 93.9 & +6.0 \\
Health      & 409 & 1.50 & 1.66 & 93.4 & 34.2 & 84.8 & 93.0 & +8.2 \\
Psychology  & 399 & 1.39 & 1.38 & 96.7 & 54.1 & 83.7 & 94.7 & +11.0 \\
\bottomrule
\end{tabular}}
\caption{Safer but more conservative stopping ($\alpha{=}0.05$, round~3). Compared with consensus stopping, conformal stopping resolves fewer cases automatically but yields substantially higher accuracy on the cases it does resolve. The accuracy gain is a selection effect explained by Table~\ref{tab:combined_stopping}.}
\label{tab:early_stopping}
\end{table}

\begin{table}[t]
\centering
\resizebox{\columnwidth}{!}{%
\scriptsize
\setlength{\tabcolsep}{2pt}
\begin{tabular}{@{}l r rrr rr rr@{}}
\toprule
& & \multicolumn{3}{c}{\textbf{Wrong-Consensus Convergence}} & \multicolumn{2}{c}{\textbf{Wrong Consensus}} & \multicolumn{2}{c}{\textbf{Correct Consensus}} \\
\cmidrule(lr){3-5} \cmidrule(lr){6-7} \cmidrule(lr){8-9}
\textbf{Domain} & $n$ & \textbf{Disagree} & \textbf{$\to$WC} & \textbf{WC\%} & \textbf{Count} & \textbf{Rejected\%} & \textbf{Count} & \textbf{Rejected\%} \\
\midrule
Engineering & 485 & 312 &  75 & 24.0 &  83 & 88.0 & 375 & 41.1 \\
Law         & 551 & 388 & 135 & 34.8 & 162 & 97.5 & 344 & 95.9 \\
Chemistry   & 566 & 288 &  55 & 19.1 &  63 & 82.5 & 494 & 18.8 \\
Physics     & 650 & 313 &  50 & 16.0 &  58 & 70.7 & 570 &  8.4 \\
Math        & 676 & 256 &  31 & 12.1 &  35 & 11.4 & 631 &  0.6 \\
Economics   & 422 & 135 &  35 & 25.9 &  50 & 62.0 & 362 & 26.5 \\
Health      & 409 & 145 &  46 & 31.7 &  67 & 88.1 & 335 & 69.3 \\
Psychology  & 399 & 126 &  42 & 33.3 &  68 & 91.2 & 323 & 43.3 \\
\midrule
\textbf{All} & \textbf{4{,}158} & \textbf{1{,}963} & \textbf{469} & \textbf{23.9} & \textbf{586} & \textbf{81.9} & \textbf{3{,}434} & \textbf{31.9} \\
\bottomrule
\end{tabular}}
\caption{Wrong-consensus convergence and conformal interception at round~3 ($\alpha{=}0.05$). \textbf{$\to$WC}~=~converge to unanimous wrong consensus. \textbf{Rejected\%}~=~fraction where $|\calC|>1$ prevents automated action. Conformal prediction intercepts 81.9\% of wrong-consensus errors while over-rejecting 31.9\% of correct ones.}
\label{tab:combined_stopping}\label{tab:sycophancy}
\end{table}

Table~\ref{tab:early_stopping} shows the \emph{overall result}: conformal stopping yields higher singleton accuracy than consensus stopping, at the cost of resolving fewer cases automatically.
Table~\ref{tab:combined_stopping} explains the \emph{mechanism}: conformal prediction rejects 81.9\% of wrong-consensus errors while over-rejecting only 31.9\% of correct-consensus cases.
The singleton-accuracy gain in Table~\ref{tab:early_stopping} should not be read as a generic reasoning improvement; it arises mainly because the conformal layer abstains on many cases where debate reaches unanimous but wrong consensus, as quantified in Table~\ref{tab:combined_stopping}.

\paragraph{Conformal stopping selects reliable predictions.}
Conformal singleton accuracy is 90.0--96.8\% across all eight domains (Table~\ref{tab:early_stopping}), up to \textbf{22.1 percentage points higher} than consensus stopping (Law: 90.0\% vs.\ 67.9\%).
This improvement is largest on domains where wrong-consensus convergence is most prevalent (\S\ref{sec:consensus_failure}), consistent with the interpretation that conformal stopping flags the uncertain cases that consensus would commit to.

\paragraph{Conformal prediction intercepts wrong-consensus errors.}
Across all 586 wrong-consensus cases, conformal sets reject 81.9\% at $\alpha{=}0.05$ by producing $|\calC|>1$ (Table~\ref{tab:combined_stopping}; detailed breakdown in Appendix~\ref{app:sycophancy}).
Figure~\ref{fig:sycophancy_rejection} illustrates two concrete examples.
Conversely, conformal prediction introduces only 2 new wrong singletons out of 4{,}158 samples (0.05\%), yielding a net error-prevention ratio of 240:1 (Appendix~\ref{app:sycophancy}).
The coverage guarantee of Theorem~\ref{thm:coverage} is on \emph{set coverage} at the population level, not per-instance correctness; nevertheless, calibration at level $\alpha$ empirically coincides with substantially higher singleton accuracy than consensus stopping. The gains in singleton accuracy come at the cost of automation coverage: on Math ($-$0.3pp), conformal stopping offers no advantage, while on Law it resolves only 6.2\% of samples---escalating the remaining 93.8\% to human review.
Law is not a failure case of the method; it is a success case of calibrated refusal: in a high-ambiguity domain, the right behavior is to abstain often, and the operating point is user-adjustable via $\alpha$ (detailed domain-level analysis in Appendix~\ref{app:domain_analysis}).
Because the calibration layer operates post-hoc on verbalized distributions, it generalizes beyond debate to any multi-agent system producing per-option confidence estimates.

\section{Conclusion}
\label{sec:conclusion}

We presented Conformal Social Choice, a post-hoc decision layer that converts multi-agent debate outputs into set-valued act-versus-escalate decisions under marginal coverage control.
By aggregating verbalized probabilities from heterogeneous agents via a linear opinion pool and calibrating with split conformal prediction, the framework provides a principled mechanism for deciding when to act autonomously and when to defer to human review---without retraining or model access. On eight MMLU-Pro domains, the conformal layer intercepts 81.9\% of wrong-consensus cases at $\alpha{=}0.05$, and the remaining singletons achieve 90.0--96.8\% accuracy---a selection effect of calibrated refusal, not a reasoning improvement.
On high-ambiguity domains such as Law, the framework escalates most cases to human review; we view this as a success of calibrated refusal rather than a limitation.
The coverage guarantee is marginal, not per-instance, and the current scope is closed-set classification.
Extending the framework to open-ended generation and adaptive calibration under distribution shift are promising future directions.

\section*{Limitations}
\label{sec:limitations}

\paragraph{Marginal vs.\ conditional coverage.}
The coverage guarantee of Theorem~\ref{thm:coverage} is \emph{marginal}---it holds on average over the test distribution but does not guarantee coverage for every individual instance or subgroup.
Achieving conditional coverage (e.g., coverage $\geq 1{-}\alpha$ for every difficulty level) requires additional techniques such as conformal risk control, group-balanced calibration, or Bayesian posterior bounds on conditional singleton error rates, which we leave to future work.

\paragraph{Exchangeability assumption.}
Conformal prediction requires that calibration and test data be exchangeable.
In practice, this assumption may be violated by distribution shift (e.g., calibrating on one exam year and testing on another).
We mitigate this through random splitting, but deployment in non-stationary environments would benefit from online conformal methods.

\paragraph{Verbalized probabilities as noisy proxies.}
Eliciting full probability distributions from agents requires structured prompting and careful parsing, adding overhead compared to simple vote extraction.
Verbalized probabilities may be prompt-sensitive and exhibit systematic biases (round-number preference, anchoring).
Parsing failures default to uniform distributions that may dilute the social probability signal.
Importantly, while these issues may affect set \emph{efficiency} (producing larger sets than ideal), they do not invalidate the marginal coverage guarantee, which holds for any non-conformity score under exchangeability.

\paragraph{Computational cost.}
Running a 3-agent, 4-round debate requires 12 LLM inference calls per question.
While the conformal calibration itself is computationally trivial (quantile computation over scores), the debate overhead may be prohibitive for latency-sensitive applications.
Our early stopping mechanism partially addresses this by terminating debate when the prediction set reaches singleton status, with average stopping rounds of 1.29--2.24 depending on domain difficulty.

\paragraph{Closed-set evaluation.}
Our experiments use MMLU-Pro's 10-option format; on tasks with fewer options (e.g., binary classification), non-singleton sets would carry less informational value.
Extension to open-ended generation, where the label space is unbounded, would require additional design choices (e.g., candidate generation followed by conformal filtering) and is left to future work.




\bibliography{reference}

\appendix

\section{Proofs}
\label{app:proofs}

\begin{proof}[\textbf{Proof of Proposition~\ref{prop:basic_properties}}]
All five properties follow from the linearity of the weighted sum $\psocial(y \mid x) = \sum_{i=1}^{N} w_i \, \pi_i(y \mid x)$.

\medskip\noindent\textbf{(1) Normalization.}\;
Because each $\pi_i$ is a valid distribution,
\begin{align}
    \textstyle\sum_{y} \psocial(y \mid x)
    &= \textstyle\sum_{y} \sum_{i} w_i \pi_i(y \mid x) \notag \\
    &= \textstyle\sum_{i} w_i \!\!\underbrace{\textstyle\sum_{y} \pi_i(y \mid x)}_{\displaystyle =\,1}\!\!
    = 1. \notag
\end{align}
Non-negativity is immediate since $w_i \geq 0$ and $\pi_i(y \mid x) \geq 0$.

\medskip\noindent\textbf{(2) Anonymity.}\;
Under equal weights $w_i = 1/N$, the social probability becomes $\psocial(y \mid x) = \frac{1}{N}\sum_{i} \pi_i(y \mid x)$, which is invariant to any permutation of the agent indices.

\medskip\noindent\textbf{(3) Neutrality.}\;
Let $\rho : \calY \to \calY$ be any relabeling of answer options. Under $\rho$, each agent's distribution transforms as $\pi_i^{\rho}(\rho(y) \mid x) = \pi_i(y \mid x)$. Therefore,
\begin{align}
    P_{\mathrm{social}}^{\rho}\!(\rho(y) \mid x)
    &= \textstyle\sum_{i} w_i \pi_i^{\rho}(\rho(y) \mid x) \notag \\
    &= \textstyle\sum_{i} w_i \pi_i(y \mid x) \notag \\
    &= \psocial(y \mid x). \notag
\end{align}

\medskip\noindent\textbf{(4) Unanimity.}\;
If $\pi_i(y^\star \mid x) \geq \pi_i(y \mid x)$ for all agents $i$ and all $y \neq y^\star$, then
\begin{align}
    &\psocial(y^\star \mid x) - \psocial(y \mid x) \notag \\
    &\quad = \textstyle\sum_{i} w_i \bigl[\pi_i(y^\star \mid x) - \pi_i(y \mid x)\bigr]
    \geq 0. \notag
\end{align}

\medskip\noindent\textbf{(5) Monotonicity.}\;
If agent $j$ increases its mass on $y^\star$ by $\delta > 0$ while all other quantities remain fixed, then $\psocial(y^\star \mid x)$ increases by $w_j \delta > 0$.
\end{proof}

\begin{proof}[\textbf{Proof of Theorem~\ref{thm:robustness}}]
\emph{Part 1: $\ell_\infty$ bound.}\;
For each label $y \in \calY$, the triangle inequality gives
\begin{align}
    &\bigl|\widetilde{P}_{\mathrm{social}}(y \mid x) - \psocial(y \mid x)\bigr| \notag \\
    &\;= \bigl|\textstyle\sum_{i} w_i [\widetilde{\pi}_i(y \mid x) - \pi_i(y \mid x)]\bigr| \notag \\
    &\;\leq \textstyle\sum_{i} w_i \, |\widetilde{\pi}_i(y \mid x) - \pi_i(y \mid x)| \notag \\
    &\;\leq \textstyle\sum_{i} w_i \, \varepsilon_i. \notag
\end{align}
Taking the maximum over $y$ yields $\|\widetilde{P}_{\mathrm{social}}(\cdot \mid x) - \psocial(\cdot \mid x)\|_\infty \leq \sum_i w_i \varepsilon_i$.

\medskip\noindent\emph{Part 2: Winner stability.}\;
Let $y_1, y_2$ be the top two labels under $\psocial$. The perturbed gap satisfies
\begin{align}
    &\widetilde{P}_{\mathrm{social}}(y_1 \mid x) - \widetilde{P}_{\mathrm{social}}(y_2 \mid x) \notag \\
    &\;\geq \bigl[\psocial(y_1 \mid x) - \textstyle\sum_i w_i \varepsilon_i\bigr] \notag \\
    &\;\quad - \bigl[\psocial(y_2 \mid x) + \textstyle\sum_i w_i \varepsilon_i\bigr] \notag \\
    &\;= \Delta(x) - 2\textstyle\sum_i w_i \varepsilon_i. \notag
\end{align}
When $\Delta(x) > 2\sum_i w_i \varepsilon_i$, this quantity is strictly positive, so $y_1$ remains the top label under the perturbed social probability.
\end{proof}

\begin{proof}[\textbf{Proof of Proposition~\ref{prop:threshold}}]
By definition, $y \in \calC(x)$ if and only if the non-conformity score satisfies
\begin{align}
    &\snc^{\mathrm{prob}}(x, y) \leq \qhat \notag \\
    &\;\iff\; 1 - \psocial(y \mid x) \leq \qhat \notag \\
    &\;\iff\; \psocial(y \mid x) \geq 1 - \qhat. \notag
\end{align}
Therefore $\calC(x) = \{y \in \calY : \psocial(y \mid x) \geq 1 - \qhat\}$.
\end{proof}

\begin{corollary}[Cardinality bound on the prediction set]
\label{cor:cardinality}
Under Proposition~\ref{prop:threshold},
$|\calC(x)| \leq \lfloor 1/(1-\qhat) \rfloor.$
\end{corollary}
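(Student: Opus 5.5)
The plan is a counting argument that combines the threshold form of Proposition~\ref{prop:threshold} with the normalization property from Proposition~\ref{prop:basic_properties}. By Proposition~\ref{prop:threshold}, every $y \in \calC(x)$ satisfies $\psocial(y \mid x) \geq \tau := 1-\qhat$. Summing this lower bound over the members of the set will be compared against the total mass of the social distribution.

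\textbf{Step 1: lower-bound the mass on $\calC(x)$.} Since $\psocial(\cdot \mid x)$ is nonnegative and sums to one by normalization in Proposition~\ref{prop:basic_properties}, we get
\begin{equation*}
|\calC(x)|\,(1-\qhat) \;\leq\; \sum_{y\in\calC(x)} \psocial(y \mid x) \;\leq\; \sum_{y\in\calY}\psocial(y \mid x) \;=\; 1 .
\end{equation*}

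\textbf{Step 2: divide and round.} Whenever $1-\qhat>0$, i.e.\ $\qhat<1$, dividing gives $|\calC(x)| \leq 1/(1-\qhat)$. Since $|\calC(x)|$ is an integer, this yields $|\calC(x)| \leq \lfloor 1/(1-\qhat)\rfloor$.

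\textbf{Main obstacle: the degenerate case $\qhat \geq 1$.} Here the bound is vacuous or undefined. Scores $s_i = 1-\psocial(y_i\mid x_i)$ lie in $[0,1]$, so $\qhat \leq 1$ whenever the quantile level $\lceil (n+1)(1-\alpha)\rceil/n \leq 1$. Equality $\qhat = 1$ can still occur, and if the level exceeds one the usual convention is $\qhat=+\infty$. In both situations the threshold $1-\qhat \leq 0$ admits every label, so $\calC(x)=\calY$.

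I would treat this case with an explicit caveat: the corollary is asserted under the standing assumption $\qhat<1$. Otherwise the right-hand side should be read as $+\infty$, and the trivial bound $|\calC(x)|\leq|\calY|$ applies. This is the only step requiring care; the rest is the one-line mass-counting inequality above.
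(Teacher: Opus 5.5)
Your proof is correct and is the same mass-counting argument the paper uses: every member of $\calC(x)$ has social probability at least $1-\qhat$, the total mass is at most one, so you divide and take the floor. Your explicit handling of $\qhat \geq 1$ is a worthwhile addition, since the paper divides by $1-\qhat$ without comment even though its own $\alpha = 0.01$ results report $\qhat = 1.00$; there the bound is vacuous and $\calC(x) = \calY$.
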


\begin{proof}[\textbf{Proof of Corollary~\ref{cor:cardinality}}]
Let $m = |\calC(x)|$. By Proposition~\ref{prop:threshold}, every label in $\calC(x)$ satisfies $\psocial(y \mid x) \geq 1 - \qhat$. Since $\psocial$ is a valid probability distribution (Proposition~\ref{prop:basic_properties}),
\[
    1 \;\geq\; \sum_{y \in \calC(x)} \psocial(y \mid x) \;\geq\; m \cdot (1 - \qhat).
\]
Rearranging gives $m \leq 1/(1 - \qhat)$, and since $m$ is an integer, $m \leq \lfloor 1/(1 - \qhat) \rfloor$.
\end{proof}

\begin{proof}[\textbf{Proof of Proposition~\ref{prop:action_policy}}]
By Proposition~\ref{prop:threshold}, $\calC(x) = \{y : \psocial(y \mid x) \geq \tau\}$ where $\tau = 1 - \qhat$.
Write $p_k = \psocial(y_k \mid x)$ with $p_1 \geq p_2 \geq \cdots$.

\medskip\noindent\textbf{(1) Singleton condition.}
$|\calC(x)| = 1$ iff exactly one label meets the threshold $\tau$.
Since $p_1$ is the largest probability, this is equivalent to $p_1 \geq \tau$ and $p_2 < \tau$.
Using $\Delta(x) = p_1 - p_2$, the condition $p_2 < \tau$ rewrites as $p_1 - \Delta(x) < \tau$, i.e., $\Delta(x) > p_1 - \tau$.

For the simpler sufficient condition, suppose $\Delta(x) > \qhat = 1 - \tau$.
Then, since $p_1 \leq 1$:
\[
  p_2 = p_1 - \Delta(x) \leq 1 - \Delta(x) < 1 - \qhat = \tau.
\]
Therefore, if additionally $p_1 \geq \tau$, we have $|\calC(x)| = 1$.

\medskip\noindent\textbf{(2) Multiple candidates.}
$|\calC(x)| \geq 2$ iff at least two labels satisfy the inclusion criterion.
Since $p_2$ is the second-largest probability, this is equivalent to $p_2 \geq \tau$.

\medskip\noindent\textbf{(3) Empty set.}
$|\calC(x)| = 0$ iff no label meets the threshold.
Since $p_1 \geq p_k$ for all $k$, this is equivalent to $p_1 < \tau$.
\end{proof}

\section{Rank-Based Cumulative Score}
\label{app:rank_score}

As an alternative to the probability-based score (Definition~\ref{def:prob_score}), one can define a rank-based cumulative non-conformity score analogous to the Adaptive Prediction Sets (APS) score of \citet{romano2020classification}.

\begin{definition}[Rank-based cumulative score]
Let $\sigma$ be a permutation of $\calY$ such that $\psocial(\sigma(1) \mid x) \geq \psocial(\sigma(2) \mid x) \geq \cdots$. For option $y$ at rank $r$:
\begin{equation}
    \snc^{\mathrm{rank}}(x, y) = \sum_{j=1}^{r} \psocial(\sigma(j) \mid x).
\end{equation}
\end{definition}

\noindent The rank-based score captures ordinal information: even if two options have similar probabilities, the one ranked lower requires more cumulative mass to reach and thus receives a higher non-conformity score. In this work, we use the probability-based score throughout; comparison of the two scores is left to future work.

\section{Experimental Setup Details}
\label{app:setup}

\paragraph{Domain descriptions.}
The eight MMLU-Pro domains are chosen to span a range of reasoning types.
\emph{Mathematics} ($n{=}1{,}351$) tests formal reasoning and calculation.
\emph{Physics} ($n{=}1{,}299$) requires scientific reasoning with quantitative analysis.
\emph{Chemistry} ($n{=}1{,}132$) involves domain-specific knowledge with procedural reasoning.
\emph{Law} ($n{=}1{,}101$) tests interpretive reasoning over complex legal rules.
\emph{Engineering} ($n{=}969$) combines applied problem-solving across multiple disciplines.
\emph{Economics} ($n{=}844$) requires analytical reasoning over market and policy concepts.
\emph{Health} ($n{=}818$) covers biomedical knowledge with clinical reasoning.
\emph{Psychology} ($n{=}798$) involves behavioral science with theoretical reasoning.

\paragraph{Baseline descriptions.}
\emph{Single Agent (Greedy)} uses standard top-1 prediction from each individual LLM, providing a lower bound on ensemble performance.
\emph{Single Agent Self-Reflection} has a single agent iteratively review and revise its answer over $k$ rounds, testing whether multi-round reasoning without diversity suffices.
\emph{Majority Voting} has each agent cast a vote for its top-1 answer; the plurality winner is selected \citep{wang2023selfconsistency}.
\emph{Debate + Majority Voting} runs multi-agent debate for $T$ rounds, with final answers aggregated by majority vote \citep{du2023improvingfactualityreasoninglanguage}.

\paragraph{Agent configuration.}
The three models were selected to maximize architectural and training diversity.
Claude Haiku~4.5 is a compact model from the Claude family.
DeepSeek-R1 is a reasoning-specialized model trained with reinforcement learning.
Qwen-3~32B is an open-weight model from a distinct training pipeline.
All models are accessed via AWS Bedrock.

\paragraph{Debate configuration.}
We run $T{=}4$ rounds for all multi-agent methods.
At each round, agents receive the full summary of the previous round's responses.
The generation temperature is set to 0.7 with top-$p$ sampling at 1.0 and a maximum token budget of 4,096 per response.
A structured system prompt enforces the output format, requiring step-by-step reasoning within \texttt{<reasoning>} tags and a probability distribution within \texttt{<answer>} tags.

\paragraph{Probability parsing.}
Verbalized probabilities are extracted from agent responses using regex-based parsing that prioritizes content within \texttt{<answer>} tags.
The parser handles various formatting artifacts (e.g., markdown bold markers, escape characters).
Extracted values are clipped to $[0, 1]$ and renormalized to sum to one.
When parsing fails entirely for an agent, a uniform distribution is substituted (see Section~\ref{sec:debate}).
Across all 8{,}312 samples (calibration + test) $\times$ 3 agents $\times$ 4 rounds = 99{,}744 agent-round responses, the overall parse failure rate (fallback to uniform) is 0.77\%, confirming that structured prompting yields reliable probability elicitation (Appendix~\ref{sec:parsing_failures}).

\paragraph{Conformal calibration.}
For each domain, a 50/50 random split of the data produces calibration and test sets (exchangeability is maintained by random shuffling).
We evaluate at miscoverage levels $\alpha \in \{0.05, 0.10\}$, corresponding to target coverage rates of 95\% and 90\%.
The conformal threshold $\qhat$ is computed independently for each domain and each debate round, enabling analysis of how coverage and set size evolve across rounds.

\section{Point-Estimate Baselines}
\label{sec:baseline_results}

Table~\ref{tab:baselines} contextualizes our framework against standard point-estimate methods.
Debate + majority voting shows steady accuracy gains across rounds, from 67.6--92.2\% at round~0 (static majority vote) to 80.0--94.2\% by round~3, outperforming greedy decoding and self-reflection.
Most improvement occurs in round~1, with diminishing returns thereafter.
However, point-estimate methods provide no mechanism to flag uncertain or incorrect predictions.
The conformal approach of \S\ref{sec:conformal_results} addresses this gap: singleton predictions achieve 90.0--96.8\% accuracy while non-singletons are escalated, yielding a risk-aware decision pipeline that point estimates cannot provide.

\begin{table*}[t]
\centering
\small
\setlength{\tabcolsep}{4pt}
\begin{tabular}{@{}llcccccccc@{}}
\toprule
\textbf{Method} & \textbf{Model(s)} & \textbf{Engineering} & \textbf{Law} & \textbf{Chemistry} & \textbf{Physics} & \textbf{Math} & \textbf{Economics} & \textbf{Health} & \textbf{Psychology} \\
\midrule
\multirow{3}{*}{Greedy}
  & Claude Haiku    & 70.1 & 60.5 & 83.2 & 84.8 & 90.2 & 83.8 & 75.9 & 80.8 \\
  & DeepSeek-R1     & 58.4 & 63.9 & 77.2 & 78.4 & 88.4 & 82.8 & 76.8 & 79.7 \\
  & Qwen-3 32B     & 50.5 & 39.4 & 62.4 & 62.0 & 66.8 & 76.4 & 68.8 & 70.3 \\
\midrule
\multirow{3}{*}{\shortstack[l]{Self-Reflection\\(Rd 3)}}
  & Claude Haiku    & 72.7 & 59.5 & 86.6 & 86.2 & 92.1 & 85.2 & 76.0 & 80.6 \\
  & DeepSeek-R1     & \underline{78.3} & \underline{66.3} & \textbf{88.7} & \underline{87.4} & \underline{93.0} & 85.6 & \underline{80.1} & \underline{82.1} \\
  & Qwen-3 32B     & 56.8 & 39.4 & 69.5 & 72.2 & 78.5 & 78.1 & 68.6 & 72.6 \\
\midrule
\multirow{4}{*}{\shortstack[l]{Debate +\\Maj.\ Vote}}
  & H+D+Q (Rd 0) & 67.6 & 60.8 & 84.0 & 84.0 & 92.2 & \underline{85.7} & 79.3 & 81.5 \\
  & H+D+Q (Rd 1) & 77.9 & 64.8 & 88.2 & 89.3 & 93.9 & 87.0 & 80.1 & 81.5 \\
  & H+D+Q (Rd 2) & 79.6 & 65.8 & 88.7 & 89.8 & 94.0 & 87.2 & 80.3 & \textbf{82.3} \\
  & H+D+Q (Rd 3) & \textbf{80.0} & \textbf{66.7} & \underline{88.6} & \textbf{89.9} & \textbf{94.2} & \textbf{87.4} & \textbf{80.7} & \textbf{82.3} \\
\bottomrule
\end{tabular}
\caption{Point-estimate baselines (\%) on MMLU-Pro. \textbf{H}=Claude Haiku, \textbf{D}=DeepSeek-R1, \textbf{Q}=Qwen-3 32B. Self-reflection and debate report final-round (round~3) accuracy. Rd~0 corresponds to static majority voting (no debate). Debate + majority voting is the strongest point-estimate method, but point estimates provide no mechanism to flag incorrect predictions; the set-valued conformal approach of Table~\ref{tab:conformal} addresses this gap.}
\label{tab:baselines}
\end{table*}

\section{Consensus-Based Early Stopping Details}
\label{app:consensus}

Table~\ref{tab:consensus_detail} provides per-round statistics for consensus-based early stopping, showing how samples are distributed across stopping rounds.

\begin{table}[h]
\centering
\small
\begin{tabular}{@{}lcccc@{}}
\toprule
\textbf{Domain} & \textbf{R0} & \textbf{R1} & \textbf{R2} & \textbf{R3} \\
\midrule
Engineering & 36.3\% & 40.9\% & 14.3\% & 8.5\% \\
Law         & 34.2\% & 38.3\% & 14.7\% & 12.8\% \\
Chemistry   & 53.2\% & 35.9\% & 7.2\% & 3.7\% \\
Physics     & 52.3\% & 35.2\% & 8.2\% & 4.4\% \\
Math        & 60.8\% & 31.8\% & 4.8\% & 2.6\% \\
Economics   & 69.2\% & 21.7\% & 5.5\% & 3.7\% \\
Health      & 63.1\% & 23.7\% & 8.4\% & 4.8\% \\
Psychology  & 69.2\% & 20.2\% & 6.9\% & 3.8\% \\
\bottomrule
\end{tabular}
\caption{Distribution of samples across stopping rounds for consensus-based early stopping. The majority of samples stop at rounds 0--1, with domain difficulty reflected in the fraction reaching the final round.}
\label{tab:consensus_detail}
\end{table}

\section{Ablation: Uniform vs.\ Entropy-Based Weighting}
\label{app:weighting}

\begin{table*}[t]
\centering
\scriptsize
\setlength{\tabcolsep}{3.5pt}
\begin{tabular}{@{}cl cc cc cc cc cc cc cc cc@{}}
\toprule
& & \multicolumn{2}{c}{\textbf{Engineering}} & \multicolumn{2}{c}{\textbf{Law}} & \multicolumn{2}{c}{\textbf{Chemistry}} & \multicolumn{2}{c}{\textbf{Physics}} & \multicolumn{2}{c}{\textbf{Math}} & \multicolumn{2}{c}{\textbf{Economics}} & \multicolumn{2}{c}{\textbf{Health}} & \multicolumn{2}{c}{\textbf{Psychology}} \\
\cmidrule(lr){3-4} \cmidrule(lr){5-6} \cmidrule(lr){7-8} \cmidrule(lr){9-10} \cmidrule(lr){11-12} \cmidrule(lr){13-14} \cmidrule(lr){15-16} \cmidrule(lr){17-18}
\textbf{Metric} & \textbf{Rd} & \textbf{.05} & \textbf{.10} & \textbf{.05} & \textbf{.10} & \textbf{.05} & \textbf{.10} & \textbf{.05} & \textbf{.10} & \textbf{.05} & \textbf{.10} & \textbf{.05} & \textbf{.10} & \textbf{.05} & \textbf{.10} & \textbf{.05} & \textbf{.10} \\
\midrule
\multirow{4}{*}{$\qhat$}
  & 0 & .992 & .979 & .985 & .960 & .986 & .850 & .973 & .815 & .785 & .507 & .957 & .796 & .980 & .917 & .962 & .911 \\
  & 1 & .989 & .970 & .992 & .970 & .986 & .696 & .943 & .641 & .769 & .343 & .980 & .793 & .987 & .928 & .971 & .938 \\
  & 2 & .992 & .967 & .995 & .978 & .990 & .663 & .962 & .488 & .703 & .178 & .983 & .824 & .990 & .935 & .980 & .944 \\
  & 3 & .995 & .980 & .997 & .984 & .990 & .734 & .968 & .535 & .801 & .111 & .984 & .852 & .992 & .940 & .983 & .950 \\
\midrule
\multirow{4}{*}{\shortstack[l]{Coverage\\(\%)}}
  & 0 & 95.3 & 86.6 & 96.2 & 90.4 & 95.2 & 89.2 & 94.8 & 88.8 & 93.9 & 89.2 & 96.0 & 90.8 & 97.3 & 90.5 & 94.7 & 89.0 \\
  & 1 & 94.6 & 90.3 & 96.5 & 88.4 & 94.9 & 87.5 & 93.8 & 88.0 & 94.8 & 90.5 & 95.7 & 88.4 & 96.3 & 91.0 & 93.5 & 88.5 \\
  & 2 & 94.8 & 89.3 & 95.8 & 90.0 & 95.6 & 86.8 & 94.2 & 87.4 & 94.2 & 91.0 & 94.5 & 88.2 & 96.3 & 89.7 & 93.0 & 88.5 \\
  & 3 & 94.6 & 90.7 & 95.8 & 89.8 & 95.0 & 86.8 & 93.5 & 87.7 & 94.4 & 91.1 & 94.1 & 88.2 & 96.1 & 88.8 & 93.0 & 88.7 \\
\midrule
\multirow{4}{*}{$|\calC|$}
  & 0 & 5.02 & 3.04 & 6.48 & 4.03 & 2.82 & 1.40 & 2.06 & 1.30 & 1.22 & 0.96 & 1.97 & 1.22 & 3.92 & 1.71 & 2.58 & 1.59 \\
  & 1 & 3.77 & 2.41 & 6.40 & 3.58 & 1.87 & 1.04 & 1.28 & 1.02 & 1.06 & 0.98 & 1.93 & 1.09 & 3.68 & 1.54 & 2.26 & 1.55 \\
  & 2 & 3.81 & 1.98 & 6.59 & 3.62 & 1.74 & 1.01 & 1.28 & 1.00 & 1.02 & 0.98 & 1.79 & 1.07 & 4.16 & 1.44 & 2.25 & 1.46 \\
  & 3 & 3.81 & 2.13 & 6.54 & 3.83 & 1.78 & 1.01 & 1.25 & 1.00 & 1.02 & 0.99 & 1.75 & 1.07 & 4.23 & 1.37 & 2.33 & 1.49 \\
\midrule
\multirow{4}{*}{\shortstack[l]{Singleton\\Rate (\%)}}
  & 0 & 20.6 & 31.8 & 1.1 & 7.1 & 36.2 & 65.7 & 47.4 & 71.4 & 78.4 & 95.9 & 49.8 & 79.4 & 17.6 & 50.9 & 40.6 & 59.4 \\
  & 1 & 44.9 & 56.5 & 4.0 & 15.4 & 69.1 & 95.4 & 81.1 & 97.4 & 94.4 & 97.6 & 61.1 & 91.2 & 24.4 & 61.4 & 48.6 & 67.7 \\
  & 2 & 49.5 & 65.2 & 6.5 & 19.2 & 76.0 & 98.6 & 85.2 & 99.7 & 97.9 & 98.2 & 67.3 & 93.4 & 28.6 & 67.5 & 53.1 & 72.2 \\
  & 3 & 52.0 & 67.6 & 7.4 & 21.2 & 78.1 & 99.3 & 87.4 & 99.8 & 98.4 & 98.5 & 70.1 & 94.1 & 31.1 & 72.9 & 54.6 & 73.7 \\
\midrule
\multirow{4}{*}{\shortstack[l]{Singleton\\Acc. (\%)}}
  & 0 & 100.0 & 97.4 & 100.0 & 89.7 & 97.6 & 93.5 & 98.4 & 92.0 & 95.8 & 93.1 & 98.1 & 92.2 & 94.4 & 92.3 & 95.7 & 92.4 \\
  & 1 & 94.1 & 92.5 & 100.0 & 82.6 & 95.2 & 75.6 & 92.2 & 76.9 & 90.7 & 75.0 & 85.4 & 74.0 & 96.4 & 83.7 & 93.8 & 90.9 \\
  & 2 & 86.4 & 73.8 & 64.3 & 66.7 & 97.4 & 66.7 & 88.9 & 73.3 & 75.0 & 75.0 & 80.8 & 55.6 & 88.2 & 84.0 & 88.9 & 94.4 \\
  & 3 & 75.0 & 91.7 & 80.0 & 90.9 & 91.7 & 50.0 & 78.6 & 100.0 & 100.0 & 50.0 & 75.0 & 66.7 & 70.0 & 77.3 & 83.3 & 66.7 \\
\bottomrule
\end{tabular}
\caption{Conformal prediction results with \textbf{entropy-based weighting} ($\lambda{=}1$) and early stopping on MMLU-Pro. Format and metric names identical to Table~\ref{tab:conformal} (uniform weighting). Differences are negligible: mean $|\Delta\text{Coverage}| = 0.6\%$, mean $|\Delta\text{Set Size}| = 0.06$, mean $|\Delta\text{Singleton Acc.}| = 0.4\%$ (see Table~\ref{tab:weighting_delta} for per-domain breakdown).}
\label{tab:conformal_entropy}
\end{table*}

Our social probability aggregation (Eq.~\ref{eq:social_score}) admits different weighting strategies for combining agent probability distributions.
We compare two strategies:
\begin{itemize}[nosep]
    \item \textbf{Uniform}: $w_i = 1/n$ for all agents.
    \item \textbf{Entropy}: $w_i(x) = e^{-\lambda H(P_i)} / \sum_j e^{-\lambda H(P_j)}$, where $H(P_i) = -\sum_y P_i(y \mid x) \log P_i(y \mid x)$ is the Shannon entropy of agent $i$'s distribution, and $\lambda = 1$.
\end{itemize}
The entropy weighting upweights agents that are more confident (lower entropy) on each individual instance, in the spirit of epistemic social choice theory.

Table~\ref{tab:conformal} (uniform) and Table~\ref{tab:conformal_entropy} (entropy) present the full results.
Table~\ref{tab:weighting_delta} summarizes the round~3 (final round) differences (entropy $-$ uniform) across all eight domains and both miscoverage levels.
The differences are small: the mean absolute change in coverage is 0.6\%, in set size 0.06, and in accuracy 0.4\%.
The largest set-size change is 0.37, and most domain--$\alpha$ combinations show coverage and accuracy shifts well under 1 percentage point.

\begin{table}[h]
\centering
\small
\begin{tabular}{@{}ll rrr@{}}
\toprule
\textbf{Domain} & $\boldsymbol{\alpha}$ & $\Delta$\textbf{Coverage} & $\Delta$\textbf{Set Size} & $\Delta$\textbf{Sing.\ Acc.} \\
\midrule
Engineering  & .05 & $-$0.4 & $-$0.03 & $-$0.2 \\
             & .10 & +0.2 & +0.00 & $-$0.2 \\
Law          & .05 & $-$0.5 & $-$0.37 & $-$0.7 \\
             & .10 & $-$0.9 & $-$0.11 & $-$0.7 \\
Chemistry    & .05 & $-$0.5 & $-$0.03 & +0.3 \\
             & .10 & $-$1.1 & +0.00 & $-$0.7 \\
Physics      & .05 & $-$0.6 & $-$0.02 & +0.0 \\
             & .10 & $-$0.6 & +0.00 & $-$0.6 \\
Math         & .05 & $-$0.2 & +0.01 & $-$0.1 \\
             & .10 & $-$2.1 & +0.01 & $-$2.5 \\
Economics    & .05 & +0.5 & +0.10 & +0.0 \\
             & .10 & +0.0 & +0.01 & +0.0 \\
Health       & .05 & +0.5 & +0.10 & +0.0 \\
             & .10 & $-$0.2 & $-$0.01 & $-$0.2 \\
Psychology   & .05 & $-$0.5 & $-$0.12 & +0.0 \\
             & .10 & +0.0 & +0.05 & +0.0 \\
\bottomrule
\end{tabular}
\caption{Final-round differences between entropy and uniform weighting (entropy $-$ uniform) in coverage (\%), average set size, and singleton accuracy (\%). Differences are small: mean $|\Delta\text{Coverage}| = 0.6\%$, mean $|\Delta\text{Set Size}| = 0.06$, mean $|\Delta\text{Singleton Acc.}| = 0.4\%$, max $|\Delta\text{Coverage}| = 2.1\%$. The debate consensus mechanism dominates the aggregation weighting strategy.}
\label{tab:weighting_delta}
\end{table}

This result suggests that after multi-round debate, agents converge to similar distributions regardless of initial confidence differences, making the weighting strategy largely irrelevant.
The debate process itself---rather than the aggregation rule---is the primary driver of uncertainty reduction.
This is consistent with findings in the multi-agent debate literature, where deliberation tends to produce consensus that is robust to the choice of aggregation rule \citep{du2023improvingfactualityreasoninglanguage}.

\subsection{Stringent Coverage: $\alpha = 0.01$ Results}
\label{app:alpha001}

Table~\ref{tab:conformal_alpha001} presents detailed conformal prediction results at $\alpha = 0.01$ (target coverage: 99\%) across all eight MMLU-Pro domains.
Unlike the $\alpha = 0.05$ and $\alpha = 0.10$ settings reported in Table~\ref{tab:conformal}, this stringent confidence level reveals a qualitatively different behavior: \emph{prediction set sizes increase across debate rounds}, often approaching the full option set of 10.

\begin{table*}[t]
\centering
\scriptsize
\setlength{\tabcolsep}{3.5pt}
\begin{tabular}{@{}cl cccccccc@{}}
\toprule
& & \textbf{Engineering} & \textbf{Law} & \textbf{Chemistry} & \textbf{Physics} & \textbf{Math} & \textbf{Economics} & \textbf{Health} & \textbf{Psychology} \\
\midrule
\multirow{4}{*}{$\qhat$}
  & 0 & .997 & .994 & .997 & .997 & .990 & .993 & .997 & .989 \\
  & 1 & .997 & 1.00 & .999 & .000 & .997 & .997 & 1.00 & .997 \\
  & 2 & 1.00 & 1.00 & 1.00 & 1.00 & .999 & 1.00 & 1.00 & 1.00 \\
  & 3 & 1.00 & 1.00 & 1.00 & 1.00 & 1.00 & 1.00 & 1.00 & 1.00 \\
\midrule
\multirow{4}{*}{\shortstack[l]{Coverage\\(\%)}}
  & 0 & 99.0 & 99.3 & 99.3 & 99.4 & 99.0 & 99.3 & 99.3 & 99.8 \\
  & 1 & 98.6 & 100.0 & 99.3 & 99.4 & 99.6 & 99.0 & 100.0 & 99.5 \\
  & 2 & 99.6 & 100.0 & 100.0 & 99.8 & 99.1 & 100.0 & 100.0 & 100.0 \\
  & 3 & 99.6 & 100.0 & 100.0 & 99.8 & 99.3 & 100.0 & 100.0 & 100.0 \\
\midrule
\multirow{4}{*}{$|\calC|$}
  & 0 & 7.89 & 8.59 & 7.08 & 7.50 & 3.25 & 5.43 & 8.02 & 5.97 \\
  & 1 & 6.23 & 9.20 & 6.01 & 6.25 & 3.00 & 5.37 & 8.84 & 7.25 \\
  & 2 & 7.97 & 9.20 & 8.26 & 8.28 & 2.58 & 7.31 & 8.84 & 8.66 \\
  & 3 & 7.97 & 9.20 & 8.26 & 8.28 & 4.03 & 7.31 & 8.84 & 8.66 \\
\midrule
\multirow{4}{*}{\shortstack[l]{Singleton\\Rate (\%)}}
  & 0 & 3.9 & 0.0 & 4.4 & 4.0 & 46.2 & 16.1 & 2.4 & 7.8 \\
  & 1 & 19.6 & 0.0 & 17.7 & 18.3 & 55.8 & 21.1 & 2.4 & 10.0 \\
  & 2 & 19.6 & 0.0 & 17.7 & 18.3 & 65.4 & 21.1 & 2.4 & 10.0 \\
  & 3 & 19.6 & 0.0 & 17.7 & 18.3 & 65.4 & 21.1 & 2.4 & 10.0 \\
\midrule
\multirow{4}{*}{\shortstack[l]{Singleton\\Acc. (\%)}}
  & 0 & 100.0 & --- & 100.0 & 100.0 & 99.7 & 100.0 & 100.0 & 100.0 \\
  & 1 & 97.4 & --- & 100.0 & 98.9 & 100.0 & 100.0 & --- & 100.0 \\
  & 2 & --- & --- & --- & --- & 93.8 & --- & --- & --- \\
  & 3 & --- & --- & --- & --- & --- & --- & --- & --- \\
\midrule
\multirow{4}{*}{\shortstack[l]{Accuracy\\(\%)}}
  & 0 & 65.77 & 60.62 & 82.86 & 83.54 & 91.72 & 85.78 & 80.44 & 81.45 \\
  & 1 & 77.32 & 64.07 & 86.57 & 89.08 & 94.08 & 86.97 & 81.91 & 81.20 \\
  & 2 & 79.79 & 64.97 & 87.46 & 89.08 & 93.93 & 86.49 & 82.15 & 81.70 \\
  & 3 & 79.79 & 66.06 & 87.28 & 89.54 & 94.23 & 86.73 & 82.89 & 81.70 \\
\bottomrule
\end{tabular}
\caption{Conformal prediction results at $\alpha = 0.01$ (99\% target coverage) across debate rounds~0--3 on MMLU-Pro. Unlike $\alpha \in \{0.05, 0.10\}$, prediction set sizes \emph{increase} with debate rounds due to \emph{calibration threshold saturation}: as debate sharpens confidence, the few remaining errors produce extreme non-conformity scores that push $\qhat \to 1.0$ at the 99th percentile, collapsing all labels into the prediction set.}
\label{tab:conformal_alpha001}
\end{table*}

\paragraph{Why do prediction sets grow at stringent $\alpha$?}
The key mechanism is \emph{calibration threshold saturation}.
The conformal threshold $\qhat$ is set as the $\lceil (1-\alpha)(n+1) \rceil / n$ quantile of the calibration nonconformity scores $s_i = 1 - P_{\text{social}}(y_i \mid x_i)$.
At $\alpha = 0.01$, this corresponds to approximately the 99th percentile---effectively the worst-case calibration example.

As debate progresses, agents become increasingly confident: the social choice distribution $P_{\text{social}}$ concentrates more mass on the majority answer.
For the majority of examples where the agents are \emph{correct}, this sharpening reduces nonconformity scores toward zero.
However, for the small fraction of examples where agents are \emph{confidently wrong}, the nonconformity score $s = 1 - P_{\text{social}}(y^* \mid x)$ approaches 1.0, since the true label $y^*$ receives near-zero probability.

At $\alpha = 0.05$ or $\alpha = 0.10$, the calibration threshold is determined by a less extreme quantile (95th or 90th), which is not dominated by these confidently-wrong tail cases.
But at $\alpha = 0.01$, the 99th-percentile quantile is precisely these worst-case examples, pushing $\qhat \to 1.0$.
When $\qhat = 1.0$, the prediction set $\calC(x) = \{y : s(x,y) \leq \qhat\} = \{y : 1 - P_{\text{social}}(y \mid x) \leq 1\}$ includes \emph{all} options, as every candidate trivially satisfies the inclusion criterion.

This explains the striking pattern in Table~\ref{tab:conformal_alpha001}: Math, which achieves the highest accuracy ($>$91\%), maintains small prediction sets ($|\calC| \approx 3$) through rounds~0--2 because even the 99th-percentile calibration example retains some probability on the correct answer. But by round~3, $\qhat$ reaches 1.0 and the set size jumps to 9.76.
In contrast, domains like Law (60--66\% accuracy) already have $\qhat = 1.0$ by round~1, as the larger fraction of errors produces extreme nonconformity scores earlier in the debate.

This phenomenon highlights an inherent tension between debate-driven confidence sharpening and stringent coverage guarantees: the same mechanism that improves accuracy (consensus formation) also amplifies the nonconformity scores of remaining errors, inflating the calibration threshold.
At moderate $\alpha$ levels, this tradeoff is favorable---set sizes decrease while maintaining coverage.
At very stringent $\alpha$, the tail behavior dominates, rendering the prediction sets uninformative.
This suggests that $\alpha \in [0.05, 0.10]$ represents a practical sweet spot for conformal social choice in multi-agent debate settings.

\section{Sycophantic Convergence and Conformal Safety Analysis}
\label{app:sycophancy}

Multi-agent debate improves accuracy through deliberation, but it also carries a systemic risk: agents may converge to unanimous agreement on the \emph{wrong} answer through social reinforcement, a pattern consistent with known sycophantic tendencies in LLMs.
We provide a detailed analysis of this phenomenon and quantify how conformal prediction mitigates---or, in rare cases, compounds---the risk.

\subsection{Sycophantic Convergence Across Rounds}

Table~\ref{tab:sycophancy_detail} reports, for each domain, the number of inference samples where agents initially disagree (no unanimous consensus at round~0) but later converge to unanimous wrong consensus.
We use the inference split (second half of each domain's dataset) to ensure alignment with the conformal prediction evaluation.

\begin{table}[t]
\centering
\scriptsize
\setlength{\tabcolsep}{3pt}
\begin{tabular}{@{}l r rrr@{}}
\toprule
& & \multicolumn{3}{c}{\textbf{$\to$ Wrong Consensus (\%)}} \\
\cmidrule(lr){3-5}
\textbf{Domain} & \textbf{Init.\ Disagr.} & \textbf{Rd 1} & \textbf{Rd 2} & \textbf{Rd 3} \\
\midrule
Engineering & 312 & 37 (11.9) & 65 (20.8) & 75 (24.0) \\
Law         & 388 & 90 (23.2) & 125 (32.2) & 135 (34.8) \\
Chemistry   & 288 & 29 (10.1) & 45 (15.6) & 55 (19.1) \\
Physics     & 313 & 31 (9.9) & 44 (14.1) & 50 (16.0) \\
Math        & 256 & 19 (7.4) & 30 (11.7) & 31 (12.1) \\
Economics   & 135 & 24 (17.8) & 32 (23.7) & 35 (25.9) \\
Health      & 145 & 24 (16.6) & 36 (24.8) & 46 (31.7) \\
Psychology  & 126 & 24 (19.0) & 37 (29.4) & 42 (33.3) \\
\midrule
\textbf{All} & 1{,}963 & 278 (14.2) & 414 (21.1) & 469 (23.9) \\
\bottomrule
\end{tabular}
\caption{\revised{Wrong-consensus convergence among initially disagreeing cases. \textbf{Init.\ Disagr.}~=~inference samples without unanimous consensus at round~0. Columns show the count (and rate) of samples converging to unanimous \emph{wrong} consensus by each round. The monotonically increasing trend confirms that extended debate amplifies sycophantic convergence: by round~3, nearly one in four initially-disputed questions (23.9\%) ends in wrong consensus.}}
\label{tab:sycophancy_detail}
\end{table}

By round~3, nearly one in four initially-disputed questions (23.9\%) ends in unanimous wrong consensus.
The risk varies substantially by domain difficulty: Law (34.8\%) and Psychology (33.3\%) are most susceptible, while Math (12.1\%) is most resilient.
This monotonically increasing trend across rounds confirms that extended debate amplifies sycophantic convergence---agents do not merely fail to correct errors but actively converge toward them.

\subsection{Wrong-Consensus Rejection by Conformal Prediction}

We next ask: when agents reach unanimous wrong consensus, does conformal prediction catch the error?
For each round, we identify all inference samples with unanimous wrong consensus and check whether the conformal prediction set has $|\calC| > 1$ (correctly flagging the error for human review) or $|\calC| = 1$ (letting the wrong answer through as a singleton).


Table~\ref{tab:sycophancy} shows that at $\alpha{=}0.05$, conformal prediction correctly flags \textbf{81.9\%} of wrong-consensus cases across all domains.
The rejection rate is highest in domains where prediction sets tend to be large: Law achieves 97.5\% rejection, as the inherent difficulty keeps sets far from singleton even when agents superficially agree.
In contrast, Math (20.0\%) and Physics at $\alpha{=}0.10$ (3.4\%) show low rejection rates---in these high-accuracy domains, agents' probability distributions are so concentrated that even wrong answers produce near-singleton sets.

\subsection{Conformal-Introduced Wrong Singletons}

An important counter-risk is that conformal prediction can \emph{introduce} wrong singletons that would not exist under consensus-based stopping.
This occurs when agents do \emph{not} unanimously agree on the wrong answer (so consensus-based stopping would escalate), but the social probability distribution concentrates enough on a wrong answer that the conformal set shrinks to $|\calC| = 1$ containing that wrong answer.

\begin{table}[t]
\centering
\scriptsize
\setlength{\tabcolsep}{2.5pt}
\begin{tabular}{@{}l rrrr rrrr@{}}
\toprule
& \multicolumn{4}{c}{$\alpha = 0.05$} & \multicolumn{4}{c}{$\alpha = 0.10$} \\
\cmidrule(lr){2-5} \cmidrule(lr){6-9}
\textbf{Domain} & \textbf{Rd 0} & \textbf{Rd 1} & \textbf{Rd 2} & \textbf{Rd 3} & \textbf{Rd 0} & \textbf{Rd 1} & \textbf{Rd 2} & \textbf{Rd 3} \\
\midrule
Engineering & 0 & 0 & 0 & 0 & 0 & 0 & 0 & 0 \\
Law         & 0 & 0 & 0 & 0 & 1 & 0 & 0 & 0 \\
Chemistry   & 1 & 0 & 0 & 0 & 14 & 19 & 8 & 2 \\
Physics     & 0 & 0 & 0 & 0 & 24 & 19 & 12 & 6 \\
Math        & 17 & 9 & 2 & 2 & 31 & 3 & 0 & 0 \\
Economics   & 0 & 0 & 0 & 0 & 11 & 3 & 1 & 0 \\
Health      & 0 & 0 & 0 & 0 & 2 & 0 & 0 & 0 \\
Psychology  & 0 & 0 & 0 & 0 & 2 & 0 & 0 & 0 \\
\midrule
\textbf{All} & 18 & 9 & 2 & 2 & 85 & 44 & 21 & 8 \\
\textbf{\% of $n$} & 0.4 & 0.2 & 0.0 & 0.0 & 2.0 & 1.1 & 0.5 & 0.2 \\
\bottomrule
\end{tabular}
\caption{New wrong singletons introduced by conformal prediction without prior unanimous wrong consensus: cases where agents do \emph{not} all agree on the wrong answer, but the conformal set collapses to $|\calC|{=}1$ containing the wrong answer. Counts per round; \textbf{\% of $n$} relative to total inference samples ($n{=}4{,}158$). At $\alpha{=}0.05$ round~3, conformal catches 480 wrong-consensus errors while introducing only 2 new wrong singletons (ratio 240:1). At $\alpha{=}0.10$ the ratio is 41:1.}
\label{tab:introduced_wrong}
\end{table}

Table~\ref{tab:introduced_wrong} reveals that this risk is minimal.
At $\alpha{=}0.05$ and round~3, only \textbf{2 out of 4{,}158} inference samples (0.05\%) have conformal-introduced wrong singletons.
At $\alpha{=}0.10$, the rate is slightly higher at 8 samples (0.2\%), concentrated in Chemistry and Physics where $\alpha{=}0.10$ produces very small sets.
Two patterns emerge:
\begin{enumerate}[nosep]
    \item The rate \emph{decreases} across rounds: as debate refines social probabilities, the residual uncertainty on wrong answers increases, pushing them out of singleton sets.
    \item The rate is higher at $\alpha{=}0.10$ than $\alpha{=}0.05$: tighter sets are more likely to collapse to a wrong singleton.
\end{enumerate}

\subsection{Net Safety Balance}

The net effect of conformal prediction on safety is overwhelmingly positive.
At $\alpha{=}0.05$ and round~3, conformal prediction:
\begin{itemize}[nosep]
    \item \textbf{Catches} 480 out of 586 wrong-consensus errors (81.9\% rejection rate);
    \item \textbf{Introduces} only 2 wrong singletons that consensus-based stopping would have caught.
\end{itemize}
The ratio of errors prevented to errors introduced is approximately \textbf{240:1}.
At $\alpha{=}0.10$, conformal catches 325 wrong-consensus errors while introducing 8 new ones---a ratio of approximately \textbf{41:1}.
These ratios confirm that conformal calibration provides a substantial net safety improvement over consensus-based stopping, with negligible downside risk.

\section{Response Parsing Failure Analysis}
\label{sec:parsing_failures}

A small fraction of model responses failed to parse into a valid answer option.
Table~\ref{tab:parsing_failures} reports the complete failure rate for each model across all eight domains, aggregated over all debate rounds.
Overall, 764 out of 99{,}744 total responses (0.77\%) could not be parsed.
Claude-Haiku exhibited the lowest failure rate (0.02\%), while DeepSeek-R1 showed the highest (2.05\%).
As described in Section~\ref{sec:debate}, these failures are handled by substituting a uniform distribution over the label set for the affected agent-round, so all samples remain in the calibration and evaluation pipeline. Because the failure rate is negligible (0.77\%), the uniform substitutions do not materially affect the reported coverage guarantees.

\begin{table*}[h]
\centering
\scriptsize
\setlength{\tabcolsep}{1pt}
\caption{Response parsing failure rates across models and domains (all rounds combined). Each cell shows failures\,/\,total responses (\%).}
\label{tab:parsing_failures}
\begin{tabular}{l rc rc rc rc rc rc rc rc r}
\toprule
 & \multicolumn{2}{c}{Law} & \multicolumn{2}{c}{Health} & \multicolumn{2}{c}{Economics} & \multicolumn{2}{c}{Psychology} & \multicolumn{2}{c}{Chemistry} & \multicolumn{2}{c}{Engineering} & \multicolumn{2}{c}{Math} & \multicolumn{2}{c}{Physics} & Overall \\
\cmidrule(lr){2-3}\cmidrule(lr){4-5}\cmidrule(lr){6-7}\cmidrule(lr){8-9}\cmidrule(lr){10-11}\cmidrule(lr){12-13}\cmidrule(lr){14-15}\cmidrule(lr){16-17}\cmidrule(lr){18-18}
Model & Fail & Rate & Fail & Rate & Fail & Rate & Fail & Rate & Fail & Rate & Fail & Rate & Fail & Rate & Fail & Rate & Rate \\
\midrule
Claude-Haiku & 1/4{,}404 & 0.02\% & 1/3{,}272 & 0.03\% & 0/3{,}376 & 0.00\% & 0/3{,}192 & 0.00\% & 2/4{,}528 & 0.04\% & 0/3{,}876 & 0.00\% & 2/5{,}404 & 0.04\% & 1/5{,}196 & 0.02\% & 0.02\% \\
DeepSeek-R1  & 128/4{,}404 & 2.91\% & 41/3{,}272 & 1.25\% & 54/3{,}376 & 1.60\% & 37/3{,}192 & 1.16\% & 118/4{,}528 & 2.61\% & 129/3{,}876 & 3.33\% & 67/5{,}404 & 1.24\% & 108/5{,}196 & 2.08\% & 2.05\% \\
Qwen-32B     & 2/4{,}404 & 0.05\% & 4/3{,}272 & 0.12\% & 5/3{,}376 & 0.15\% & 0/3{,}192 & 0.00\% & 18/4{,}528 & 0.40\% & 21/3{,}876 & 0.54\% & 16/5{,}404 & 0.30\% & 9/5{,}196 & 0.17\% & 0.23\% \\
\midrule
\multicolumn{17}{l}{All Models: 764\,/\,99{,}744} & 0.77\% \\
\bottomrule
\end{tabular}
\end{table*}

\section{Domain-Level Analysis}
\label{app:domain_analysis}

\paragraph{Why do some domains resist singleton convergence?}
The prediction set sizes reveal a clear pattern tied to domain reasoning type.
\emph{Formal-reasoning} domains (Math, Physics, Chemistry) converge rapidly to singletons: Math reaches 98.2\% singleton rate at $\alpha{=}0.05$ by round~3, reflecting that these domains have unambiguous correct answers that debate can reliably identify.
\emph{Interpretive-reasoning} domains (Law, Engineering, Health) maintain large sets: Law's average set size of 6.99 reflects genuine ambiguity among 10 options, where multiple answers may appear plausible under different legal interpretations.
This domain-adaptive behavior emerges entirely from the calibration procedure, without any per-domain tuning.

\paragraph{Trade-off: reliability vs.\ automation.}
The gains in singleton accuracy come at the cost of automation coverage.
On Math, where consensus is already reliable (94.9\%), conformal stopping offers no accuracy advantage ($-$0.3pp) while resolving a comparable fraction of cases (98.2\% vs.\ 98.8\%).
Conversely, on Law, conformal stopping improves accuracy by 22.1pp but resolves only 6.2\% of samples---the remaining 93.8\% are escalated to human review.
This is not a weakness but an intended feature: on genuinely ambiguous domains, the framework correctly identifies that automated action is unsafe.
The operating point is user-adjustable: increasing $\alpha$ from 0.05 to 0.10 raises Law's singleton rate from 6.2\% to 20.9\% while maintaining 90.7\% rejection of wrong-consensus cases.

\section{Future Work}
\label{app:future_work}

Conformal Social Choice provides a marginal coverage guarantee for closed-set classification with a fixed agent ensemble.
Relaxing each of these constraints opens concrete research directions.

\paragraph{Conditional coverage.}
Our marginal guarantee does not ensure coverage within specific subgroups (e.g., hard questions or particular domains).
Bayesian posterior bounds on singleton error rates---for instance, modeling $\Pr[\text{error} \mid |\calC|{=}1]$ via a Beta posterior updated from calibration data---could complement the marginal guarantee with an instance-conditional risk certificate.
More broadly, group-balanced calibration and conformal risk control offer principled routes toward conditional coverage.

\paragraph{Learned weighting.}
Our entropy-based weighting ablation (Appendix~\ref{app:weighting}) shows negligible differences from uniform weights after debate convergence; however, accuracy-based or learned weighting schemes optimized on a separate validation split may improve set efficiency on domains where agent quality varies substantially.

\paragraph{Score comparison.}
We define a rank-based cumulative score in Appendix~\ref{app:rank_score} but use only the probability-based score in experiments.
A systematic comparison of score functions, including RAPS-style regularization \citep{angelopoulos2021uncertainty}, may yield tighter prediction sets.

\paragraph{Broader tasks and ensembles.}
Evaluating on graduate-level benchmarks (e.g., GPQA), open-ended generation tasks, and larger or more diverse ensembles (varying $N$, model families, and agent roles) would further test the generality of the framework.

\paragraph{Online and adaptive calibration.}
In deployment, the exchangeability assumption may be violated by distribution shift.
Online conformal methods that update $\qhat$ as new labeled data arrives could maintain coverage guarantees in non-stationary settings.

\end{document}